\newtheorem{theorem}{Theorem}
\newtheorem{proposition}{Proposition}
\newcommand{\bR}{\mathbb{R}}
\newcommand{\bZ}{\mathbb{Z}}
\newcommand{\e}{\varepsilon}
\newcommand{\statespace}{\mathcal{S}}
\newcommand{\attribute}{s}
\newcommand{\sensitivity}{\frac{\partial U}{\partial \attribute_i} (\frac{\partial C}{\partial \attribute_i})^{-1}}
\newcommand{\sensitivityj}{\frac{\partial U}{\partial \attribute_j} (\frac{\partial C}{\partial \attribute_j})^{-1}}
\newcommand{\sensitivityk}{\frac{\partial U}{\partial \attribute_k} (\frac{\partial C}{\partial \attribute_k})^{-1}}
\newcommand{\off}{\textsc{Off}}
\newcommand{\basis}{\mathbf{e}}
\newcommand{\cost}{C}
\newcommand{\rate}{f}
\newcommand{\numattributes}{L}
\newcommand{\utility}{U}
\newcommand{\proxyset}{\mathcal{J}}
\newcommand{\numproxy}{J}
\newcommand{\maxnumproxy}{\numproxy^{\max}}
\newcommand{\proxyattribute}{\attribute_{\proxyset}}
\newcommand{\proxyutility}{\Tilde{\utility}}
\newcommand{\leftset}{\mathcal{K}}
\newcommand{\leftattribute}{\attribute_{\mathcal{K}}}
\newcommand{\humanoptimal}{\attribute^*}
\title{Consequences of Misaligned AI}
\author{%
  Simon Zhuang \\
  Center for Human-Compatible AI  \\
  University of California, Berkeley\\
  Berkeley, CA 94709 \\
  \texttt{simonzhuang@berkeley.edu} \\
   \And
  Dylan Hadfield-Menell \\
  Center for Human-Compatible AI \\
  University of California, Berkeley\\
  Berkeley, CA 94709 \\
  \texttt{dhm@berkeley.edu} \\
}
\begin{document}

\maketitle

\begin{abstract}
AI systems often rely on two key components:  a specified goal or reward function and an optimization algorithm  to  compute  the  optimal  behavior  for  that  goal. This approach is intended to provide value for a principal: the user on whose behalf the agent acts. The objectives given to these agents often refer to a partial specification of the principal's goals. We consider the cost of this incompleteness by analyzing  a  model  of  a principal and an agent in a resource constrained world where the $L$ attributes of the state correspond to different  sources  of  utility  for  the  principal. We assume that the reward function given to the agent only has support on $J < L$ attributes. The contributions of our paper are as follows:  1) we propose a novel model of  an  incomplete  principal—agent  problem  from  artificial  intelligence;  2)  we provide necessary and sufficient conditions under which indefinitely optimizing for any incomplete proxy objective leads to arbitrarily low overall utility; and 3) we show how modifying the setup to allow reward functions that reference the full state or allowing the principal to update the proxy objective over time can lead to higher utility solutions. The results in this paper argue that we should view the design of reward functions as an interactive and dynamic process and identifies a theoretical scenario where some degree of interactivity is desirable.
\end{abstract}

\section{Introduction}

In the story of King Midas, an ancient Greek king makes a wish that everything he touch turn to gold. He subsequently starves to death as his newfound powers transform his food into (inedible) gold. His wish was an \emph{incomplete} representation of his actual desires and he suffered as a result. This story, which teaches us to be careful about what we ask for, lays out a fundamental challenge for designers of modern autonomous systems.

Almost any autonomous agent relies on two key components: a specified goal or reward function for the system and an optimization algorithm to compute the optimal behavior for that goal. This procedure is intended to produce value for a \emph{principal}: the user, system designer, or company on whose behalf the agent acts. Research in AI typically seeks to identify more effective optimization techniques under the, often unstated, assumption that better optimization will produce more value for the principal. If the specified objective is a complete representation of the principal's goals, then this assumption is surely justified. 

Instead, the designers of AI systems often find themselves in the same position as Midas. The misalignment between what we can specify and what we want has already caused significant harms~\cite{thomas2020problem}. Perhaps the clearest demonstration is in content recommendation systems that rank videos, articles, or posts for users. These rankings often optimize engagement metrics computed from user behavior. The misalignment between these proxies and complex values like time-well-spent, truthfulness, and cohesion contributes to the prevalence of clickbait, misinformation, addiction, and polarization online~\cite{stray2020}. Researchers in AI safety have argued that improvements in our ability to optimize behavior for specified objectives, \emph{without} corresponding advancements in our ability to avoid or correct specification errors, will amplify the harms of AI systems~\cite{bostrom, cirl}. We have ample evidence from both theory and application, that it is impractical, if not impossible, to provide a complete specification of preferences to an autonomous agent. 

The gap between specified proxy rewards and the true objective creates a principal-agent problem between the designers of an AI system and the system itself: the objective of the principal (the designer) is different from, and thus potentially in conflict with, the objective of the autonomous agent. In human principal---agent problems, seemingly inconsequential changes to an agent's incentives often lead to surprising, counter-intuitive, and counter-productive behavior \cite{kerr1975}. Consequently, we must ask when this \emph{misalignment} is costly: when is it counter-productive to optimize for an incomplete proxy?

     \begin{figure}
         \centering
         \includegraphics[width=1\textwidth]{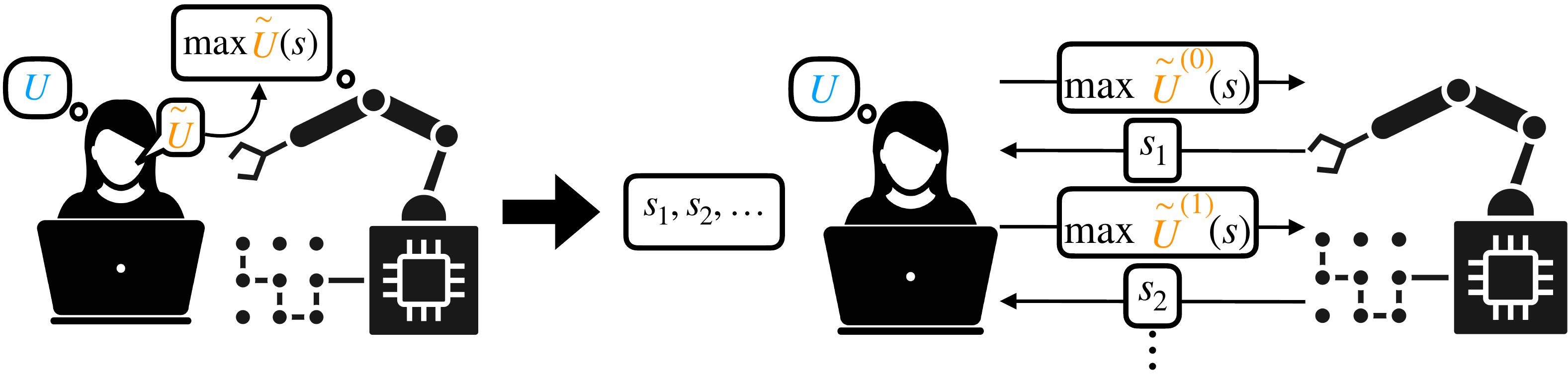}
         \caption{Our model of the principal---agent problem in AI. Starting from an initial state $\attribute^{(0)}$, the robot eventually outputs a mapping from time $t \in \bZ^+$ to states. \textbf{Left:} The human gives the robot a single proxy utility function to optimize for all time. We prove that this paradigm reliably leads to the human actor losing utility, compared to the initial state.  \textbf{Right:} An interactive solution, where the human changes the proxy utility function at regular time intervals depending on the current allocation of resources. We show that, at least in theory, this approach \emph{does} produce value for the human, even under adversarial assumptions.}
         \label{fig:diagram}
     \end{figure}

In this paper, we answer this question in a novel theoretical model of the principal-agent value alignment problem in artificial intelligence. Our model (Fig. \ref{fig:diagram}, \textbf{left}), considers a resource-constrained world where the $\numattributes$ attributes of the state correspond to different sources of utility for the (human) principal. We model incomplete specification by limiting the (artificial) agent's reward function to have support on $\numproxy < \numattributes$ attributes of the world. Our main result identifies conditions such that \emph{any} misalignment is costly: starting from any initial state, optimizing any fixed incomplete proxy eventually leads the principal to be arbitrarily worse off. We show relaxing the assumptions of this theorem allows the principal to gain utility from the autonomous agent. Our results provide theoretical justification for impact avoidance~\cite{krakovna} and interactive reward learning~\cite{cirl} as solutions to alignment problems.

The contributions of our paper are as follows: 1) we propose a novel model of an incomplete principal-agent problem from artificial intelligence; 2) we provide necessary and sufficient conditions within this framework under which, \emph{any} incompleteness in objective specification is arbitrarily costly; and 3) we show how relaxing these assumptions can lead to models which have good average case and worst case solutions. Our results suggest that managing the gap between complex qualitative goals and their representation in autonomous systems is a central problem in the field of artificial intelligence.

\subsection{Related Work}
\paragraph{Value Alignment}
The importance of having AI objectives line up with human objectives has been well-documented, with numerous experts postulating that misspecified goals can lead to undesirable results \cite{russellnorvig,omohundro2008, bostrom, russell2019}. In particular, recognized problems in AI safety include ``negative side effects," where designers leave out seemingly irrelevant (but ultimately important) features, and ``reward hacking," where optimization exploits loopholes in reward functions specifications \cite{amodei2016concrete}.  Manheim and Garrabrant (2018) \cite{manheim2018categorizing} discuss overoptimization in the context of Goodhart's Law and provide four distinct mechanisms by which systems overoptimize for an objective. 

\paragraph{Incomplete Contracts}
There is a clear link between incomplete contracting in economics, where contracts between parties cannot specify outcomes for all states, and AI value alignment \cite{hadfield2019incomplete}. It has long been recognized that contracts (i.e., incentive schemes) are routinely incomplete due to, e.g., unintended errors \cite{williamson1975markets}, challenges in enforcement \cite{klein1978vertical}, or costly cognition and drafting \cite{shavell1980damage}. Analogous issues arise in applications of AI through designer error, hard to measure sources of value, and high engineering costs. As such, Hadfield-Menell and Hadfield note that legal and economic research should provide insight into solving misalignment \cite{hadfield2019incomplete}. 

\paragraph{Impact Minimization}
One proposed method of preventing negative side-effects is to limit the impact an AI agent can have on its environment. Armstrong and Levinstein (2017) propose the inclusion of a large impact penalty in the AI agent's utility function \cite{armstrong2017}. In this work, they suggest impact to be measured as the divergence between a distribution of states of the world with the distribution if the AI agent had not existed. Thus, distributions sufficiently different would be avoided. Alternatively, other approaches use an impact regularizer learned from demonstration instead of one explicitly given \cite{amodei2016}. Krakovna et al. (2018) \cite{krakovna} expand on this work by comparing the performance of various implementations of impact minimization in an AI Safety Gridworld suite \cite{leike2017ai}. 

\paragraph{Human-AI Interaction}
A large set of proposals for preventing negative side-effects involve regular interactions between human and AI agents that change and improve an AI agent's objective. Eckersley (2018) \cite{eckersley2018} argues against the use of rigid objective functions, stating the necessity of a degree of uncertainty at all times in the objective function. Preference elicitation (i.e., preference learning) is an old problem in economics~\cite{bradley1952rank} and researchers have proposed a variety of interactive approaches. This includes systems that learn from direct comparisons between options~\cite{boutilier2002pomdp, braziunas2006preference, christiano2017deep}, demonstrations of optimal behavior~\cite{ng2000, abbeel2004, ziebart2008}, corrective feedback~\cite{bajcsy2017, griffith2013policy, amershi2014power}, and proxy metrics~\cite{ird}.

\section{A Model of Value Alignment} 
\label{sec:formulation}
    
    In this section, we formalize the alignment problem in the context of objective function design for AI agents. A human builds a powerful robot\footnote{We use ``human" and ``robot" to refer to any designer and any AI agent, respectively, in our model.} to alter the world in a desirable way. If they could simply express the entirety of their preferences to the robot, there would not be value misalignment. Unfortunately, there are many attributes of the world about which the human cares, and, due to engineering and cognitive constraints~\cite{hadfield2019incomplete} it is intractable to enumerate this complete set to the robot. Our model captures this by limiting robot's (proxy) objective to depend on a subset of these attributes.
    
    \subsection{Model Specification}
    
    \subsubsection{Human Model}

    \begin{description}
        \item[Attribute Space] Let \emph{attribute space} $\statespace \subset \bR^\numattributes$ denote the set of feasible states of the world, with $\numattributes$ attributes that define the support of the human's utility function. The world starts in \emph{initial state} $\attribute^{(0)} \in \statespace$. We assume that $\statespace$ is closed, and $\statespace$ can be written as $\{\attribute \in \bR^\numattributes: \cost(\attribute) \le 0\}$ where \emph{constraint function} $\cost: \bR^\numattributes \to \bR$ is a continuous function strictly increasing in each attribute. Furthermore, each attribute is bounded below at $b_i$.
        \item[Utility Function] The human has a \emph{utility function} $\utility: \bR^\numattributes \to \bR$ that represents the utility of a  (possibly infeasible) state $\attribute \in \bR^\numattributes$. This represents the human's preference over states of the world. We assume that $\utility$ is continuous and strictly increasing in each attribute. The human seeks to
        maximize $\utility(\attribute)$ subject to $\attribute \in \statespace$. 
    \end{description}

    Each tuple $(\statespace, \utility, \attribute^{(0)})$ defines an instance of the problem. The attributes represent the aspects of the world that the human cares about. We associate higher values with more desirable states of the world. However, $\cost$ represents a physical constraint that limits the set of realizable states. This forces an agent to make tradeoffs between these attributes. 
    
The human seeks to maximize utility, but they cannot change the state of the world on their own. Instead, they must work through the robot. 

	\subsubsection{Robot Model}
	Here, we create a model of the robot, which is used by the human to alter the state of the world. The human endows the robot with a set of proxy attributes and a proxy utility function as the objective function to optimize. The robot provides incremental improvement to the world on the given metric, constrained only by the feasibility requirement of the subsequent states of the world. In particular, the robot has no inherent connection to the human's utility function. Visually, the left diagram in Figure \ref{fig:diagram} shows this setup.

	\begin{description}
    \item [Proxy Attributes]  The human chooses a set of \emph{proxy attributes} $\proxyset \subset \{1,...,\numattributes\}$ of relevant attributes to give to the robot. Let $\maxnumproxy < \numattributes$ be the maximum possible number of proxy attributes, so $\numproxy=|\proxyset| \le \maxnumproxy$. For a given state $\attribute \in \statespace$, define $\proxyattribute = (\attribute_j)_{j \in \proxyset}$. Furthermore, we define the set of \emph{unmentioned attributes} $\leftset = \{1,...,L\} \backslash \proxyset$ and $\leftattribute = (\attribute_k)_{k \in \leftset}$. 
    \item [Proxy Utility Function] The robot is also given a \emph{proxy utility function} $\proxyutility: \bR^\numproxy \to \bR$, which represents an objective function for the robot to optimize, which takes in only the value of proxy attributes as input.
	    
	    \item [Incremental Optimization] The robot incrementally optimizes the world for its proxy utility function. We model this as a \emph{rate function}, a continuous mapping from $\bR^+$ to $\bR^\numattributes$, notated as $t \mapsto \rate(t)$, where we refer to $t$ as \emph{time}. The rate function is essentially the derivative of $\attribute$ with respect to time. The state at each $t$ is $\attribute^{(t)} = \attribute^{(0)} +\int_{0}^{t} \rate(u) du$. We denote the function $t \mapsto \attribute^{(t)}$ to be the \emph{optimization sequence}. We require that the entire sequence be feasible, i.e. that $\attribute^{(t)} \in \statespace$. for all $t \in [0,\infty)$ 
	    
        \item [Complete Optimization]
	    Furthermore, we assume that $\limsup\limits_{t \to \infty} \proxyutility(\proxyattribute^{(t)}) = \sup\limits_{\attribute \in \statespace}\proxyutility(\proxyattribute)$. If possible, we also require that $\lim\limits_{t \to \infty} \proxyutility(\proxyattribute^{(t)}) = \sup\limits_{\attribute \in \statespace}\proxyutility(\proxyattribute)$. Essentially, this states that the robot will eventually reach the optimal state for proxy utility (if the limit is finite) or will increase proxy utility to arbitrarily large values.
	    
	\end{description}

	The human's task is to design their proxy metric so that the robot will cause increases in human utility. We treat the robot as a black box: beyond increasing proxy utility, the human has no idea how the robot will behave. We can make claims for all optimizations sequences, such as guaranteed increases or decreases in utility, or the worst-case optimization sequence, yielding lower bounds on utility.

\subsection{Example: Content Recommendation}
Before deriving our results, we show how to model algorithmic content recommendation in this formalism. In this example, the designer cares about 4 attributes (i.e., $\numattributes = 4$ ): A) the amount of ad revenue generated (i.e., watch-time or clicks); B) engagement quality (i.e., meaningful interactions~\cite{Wagner2019}); C) content diversity; and D) overall community well-being. The overall utility is the sum of these attributes: $U(s) = s_A + s_B + s_C + s_D.$ We use a resource constraint on the sum-of-squared attributes to model the user's finite attention and space constraints in the interface: $C(s) = s_A^2 + s_B^2 + s_C^2 + s_D^2 - 100.$ We imagine that the system is initialized with a chronological or random ranking so that the starting condition exhibits high community well-being and diversity.

It is straightforward to measure ad revenue, non-trivial and costly to measure engagement quality and diversity, and extremely challenging to measure community well-being. In our example, the designers opt to include ad revenue and engagement quality in their proxy metric. Fig.~\ref{fig:example} (\textbf{left}) plots overall utility and proxy utility for this example as a function of the number of iterations used to optimize the proxy. Although utility is generated initially, it plateaus and fall off quickly. Fig.~\ref{fig:example} (\textbf{right}) shows that this happens for \emph{any} combination of attributes used in the proxy. In the next section, we will show that this is no accident: eventually the gains from improving proxy utility will be outweighed by the cost of diverting resources from unreferenced attributes.

    \section{Overoptimization}
    \label{sec:overoptimization}
    
    \begin{figure}
         \centering
         \includegraphics[width=.98\textwidth]{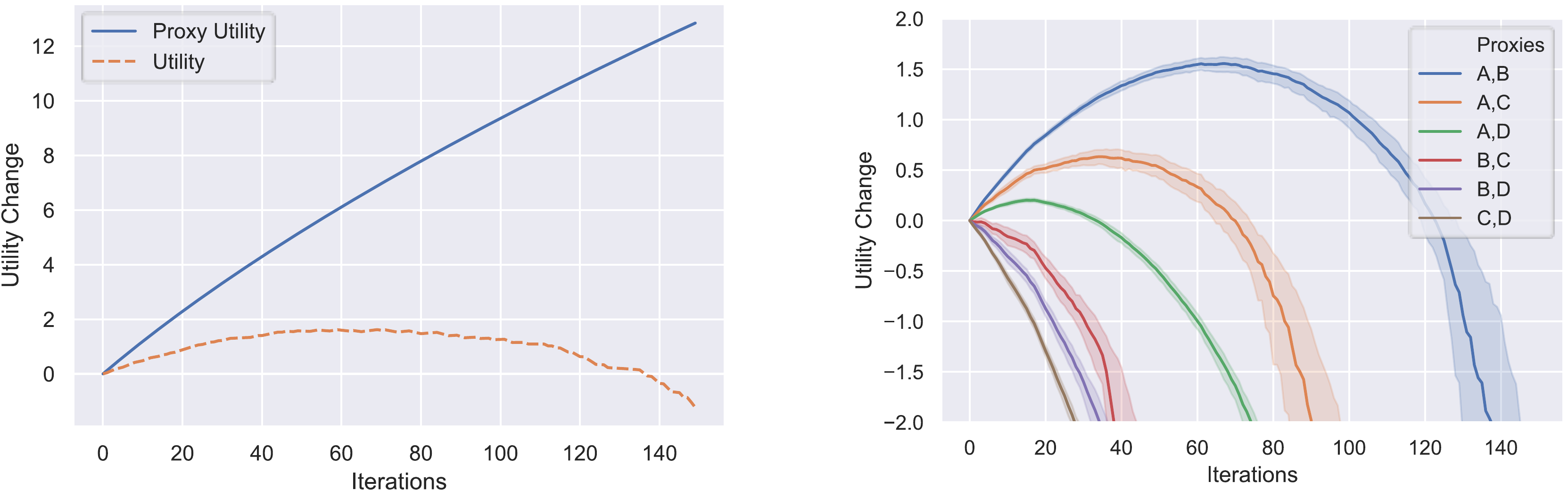}
         \caption{An illustrative example of our model with $\numattributes=4$ and $\numproxy=2$. \textbf{Left:} Proxy utility and true utility eventually diverge as the agent overallocates resources from unreferenced attributes to the proxy variables. \textbf{Right:} The true utility generated by optimizing all pairs of proxy attributes. The utility generation is eventually negative in all cases because this example meets the conditions of Theorem 2. }

         \label{fig:example}
     \end{figure}

    In this section, we identify the situations in which such results occur: when does a misaligned proxy utility function actually cause utility loss? Specifically, we determine how human preferences over states of the world relate with the constraint on feasible states in a way that guarantees that optimization becomes costly. First, we show that if proxy optimization converges, this drives to unmentioned attributes to their minimum.
    
    \begin{theorem}
    For any continuous strictly increasing proxy utility function based on $\numproxy< \numattributes$ attributes, if $\attribute^{(t)}$ converges to some point $\attribute^*$, then $\attribute^*_k = b_k$ for $k \in \leftset$.
    \end{theorem}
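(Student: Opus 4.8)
The plan is to first collapse the dynamical hypotheses into one static statement and then argue by a feasibility-preserving perturbation. Since $\statespace$ is closed and $\attribute^{(t)} \in \statespace$ for every $t$, the limit satisfies $\humanoptimal \in \statespace$. By continuity of $\proxyutility$ we have $\proxyutility(\proxyattribute^{(t)}) \to \proxyutility(\humanoptimal_{\proxyset})$, and a convergent sequence equals its own $\limsup$, so the complete-optimization assumption forces $\proxyutility(\humanoptimal_{\proxyset}) = \sup_{\attribute \in \statespace}\proxyutility(\proxyattribute)$; in particular this supremum is finite and is attained at $\humanoptimal$. In other words, $\humanoptimal$ is a global maximizer of the proxy objective over the feasible set, and from here the optimization sequence plays no further role.

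Next I would argue by contradiction: suppose $\humanoptimal_k > b_k$ for some $k \in \leftset$, and fix some $j \in \proxyset$ (here we use $\numproxy \ge 1$; for $\proxyset = \emptyset$ the proxy is constant and the statement is degenerate). Choose $\delta$ with $0 < \delta < \humanoptimal_k - b_k$. Because $\cost$ is \emph{strictly} increasing in its $k$-th argument, $\cost(\humanoptimal - \delta\basis_k) < \cost(\humanoptimal) \le 0$, so shaving an unmentioned attribute has opened up strict slack in the constraint. We then spend that slack on attribute $j$: the map $\eta \mapsto \cost(\humanoptimal - \delta\basis_k + \eta\basis_j)$ is continuous and strictly negative at $\eta = 0$, hence remains $\le 0$ for all sufficiently small $\eta > 0$. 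Fix such an $\eta > 0$ and set $\attribute' = \humanoptimal - \delta\basis_k + \eta\basis_j$.

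It then remains to check that $\attribute'$ produces the contradiction. Every coordinate of $\attribute'$ other than the $k$-th and $j$-th is unchanged from $\humanoptimal$, hence still $\ge$ its lower bound; moreover $\attribute'_k = \humanoptimal_k - \delta \ge b_k$ and $\attribute'_j = \humanoptimal_j + \eta \ge b_j$; combined with $\cost(\attribute') \le 0$ this gives $\attribute' \in \statespace$. But $\attribute'$ agrees with $\humanoptimal$ on every proxy coordinate except the $j$-th, on which it is strictly larger, so strict monotonicity of $\proxyutility$ yields $\proxyutility(\attribute'_{\proxyset}) > \proxyutility(\humanoptimal_{\proxyset}) = \sup_{\attribute \in \statespace}\proxyutility(\proxyattribute)$, which is impossible for $\attribute' \in \statespace$. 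Hence $\humanoptimal_k = b_k$ for every $k \in \leftset$.

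The one place that needs care --- the main obstacle --- is guaranteeing that the two-sided perturbation stays feasible. It is essential that $\cost$ is \emph{strictly} monotone, so that decreasing an unmentioned attribute gives the strict inequality $\cost(\humanoptimal - \delta\basis_k) < 0$; only then does continuity of $\cost$ let us add a positive increment to a proxy attribute without violating $\cost \le 0$. Doing the two moves in the opposite order, or starting from the constraint boundary with only a weak inequality, would fail. Everything else --- the limit point being feasible, the $\limsup$ upgrading to an equality, and the lower bounds on the untouched coordinates --- is routine bookkeeping.
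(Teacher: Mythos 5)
Your proposal is correct and follows essentially the same route as the paper's own proof: both argue that a limit point with $\attribute^*_k > b_k$ for some unmentioned $k$ admits a feasible perturbation $\attribute^* - \delta\basis_k + \eta\basis_j$ with strictly higher proxy utility, contradicting the complete-optimization property. Your version merely makes explicit the bookkeeping the paper leaves implicit (that the limit is feasible and attains $\sup_{\attribute\in\statespace}\proxyutility(\proxyattribute)$, and why strict monotonicity of $\cost$ is needed to keep the perturbation feasible).
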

    \begin{proof}
    Let $\basis_i$ be the standard basis vector in dimension $i$. If there exists $k \in \leftset$ where $s^*_k \ne b_k$, then there exists $\e, \delta > 0$ such that $s'=s^* - \e \basis_k + \delta \basis_j$ for some feature $j \in \proxyset$ with $s'$ feasible, since $\cost$ is strictly increasing. Since proxy utility is strictly increasing in the proxy features, $s'$ has higher proxy utility than $s^*$. Thus $s^*$ is not the convergent point of the sequence.
    \end{proof}
    
    This is not surprising, and may not even be suboptimal. Proxy optimization may not converge to a finite state, and even if it does, that is insufficient for the state to necessarily be bad.
    
    We say that the problem is \emph{$u$-costly} if for any $\attribute^{(t)}$, if $\attribute^{(t)}$ is a optimization sequence where $\limsup\limits_{t \to \infty} \proxyutility(\proxyattribute^{(t)}) = \sup\limits_{\attribute \in \statespace}\proxyutility(\proxyattribute)$, then $\liminf\limits_{t \to \infty} U(\attribute^{(t)}) \le u$. Furthermore, if $\lim\limits_{t \to \infty} \proxyutility(\proxyattribute^{(t)}) = \sup\limits_{\attribute \in \statespace}\proxyutility(\proxyattribute)$, then $\lim\limits_{t \to \infty} U(\attribute^{(t)}) \le u$.

    Essentially, this means that optimization is guaranteed to yield utility less than $u$ for a given proxy utility function.

    In Theorem \ref{thm:overoptimization}, we show the most general conditions for the guarantee of overoptimization.
    
	\begin{theorem} \label{thm:overoptimization}

		Suppose we have utility function $\utility$ and state space $\statespace$. Then $\{\attribute \in \bR^\numattributes: \cost(\attribute) \le 0 \text{ and }\utility(\attribute) \ge u\}$ is compact for all $u \in \bR$ if and only if for any $u \in \bR$, continuous strictly increasing proxy utility function based on $\numproxy< \numattributes$ attributes, and $k \in \leftset$, there exists a value of $B \in \bR$ such that if $b_k < B$ then optimization is $u$-costly.
	\end{theorem}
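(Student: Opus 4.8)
The plan is to route both directions through two scalars attached to a candidate proxy $\proxyutility$ on coordinate set $\proxyset$, an excluded coordinate $k\in\leftset$, and the lower bound $b_k$: the \emph{proxy optimum} $P(b_k):=\sup_{\attribute\in\statespace}\proxyutility(\proxyattribute)$, and $M(u):=\sup\{\proxyutility(\proxyattribute): \cost(\attribute)\le0,\ \utility(\attribute)\ge u,\ \attribute_i\ge b_i\ \forall i\ne k\}$. Note $M(u)$ does not involve $b_k$, while as $b_k$ decreases $\statespace$ grows, so $P(b_k)$ increases up to the value $P(-\infty)$ obtained by dropping the lower bound on attribute $k$ altogether. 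The load-bearing step is the claim: \emph{if $P(b_k)>M(u)$ then optimization is $u$-costly}. To prove it, take any complete optimization $\attribute^{(t)}$, so $\limsup_t\proxyutility(\proxyattribute^{(t)})=\sup_{\statespace}\proxyutility=P(b_k)$; if $\liminf_t\utility(\attribute^{(t)})>u$, then for some $u'>u$ the tail lies in $\{\cost\le0,\ \utility\ge u',\ \attribute_i\ge b_i\ \forall i\ne k\}$, a closed subset of the compact set $\{\cost\le0,\ \utility\ge u'\}$ and hence compact, so $\proxyutility(\proxyattribute^{(t)})\le M(u')\le M(u)<P(b_k)$ for all large $t$, contradicting complete optimization; thus $\liminf_t\utility(\attribute^{(t)})\le u$, and the same argument along a subsequence gives $\limsup_t\utility(\attribute^{(t)})\le u$ whenever $\lim_t\proxyutility(\proxyattribute^{(t)})=P(b_k)$.

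For the forward direction, assume every $\{\cost\le0,\ \utility\ge u\}$ is compact and fix $u$, a proxy $\proxyutility$ on $\numproxy<\numattributes$ coordinates, and $k\in\leftset$. The set defining $M(u)$ is a closed subset of the compact set $\{\cost\le0,\ \utility\ge u\}$, so $M(u)$ is finite and attained at some $\attribute^M$ (if that set is empty then $M(u)=-\infty$, every feasible state has utility below $u$, and $u$-costliness is immediate). Starting from $\attribute^M$, I would decrease coordinate $k$ by a small $\e>0$ — permitted because the relaxed problem imposes no lower bound on attribute $k$ — which strictly decreases $\cost$ and opens slack, then increase some coordinate $j\in\proxyset$ by a small $\delta>0$ to restore $\cost\le0$; since $\proxyutility$ is strictly increasing in coordinate $j$, the result is feasible for the relaxed problem with proxy value $>M(u)$, so $P(-\infty)>M(u)$. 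Since $P(b_k)\uparrow P(-\infty)$, there is $B$ with $P(b_k)>M(u)$ whenever $b_k<B$, and the key claim yields $u$-costliness (the case $P(-\infty)=+\infty$ is identical).

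For the converse I would argue by contraposition. Suppose $\{\cost\le0,\ \utility\ge u_0\}$ is not compact for some $u_0$; it is closed because $\cost,\utility$ are continuous, hence unbounded, so fix $\attribute^{(n)}$ in it with $\|\attribute^{(n)}\|\to\infty$ and, passing to a subsequence, $\attribute^{(n)}/\|\attribute^{(n)}\|\to d\ne0$. The idea is to choose a proxy that ``points along $d$'': pick a coordinate $j^*$ in which escape to $+\infty$ becomes feasible once a companion coordinate $k$ is freed, put $j^*\in\proxyset$ with $\proxyutility$ increasing (e.g.\ a weighted sum of the proxy coordinates), put $k\in\leftset$, and send $b_k$ to $-\infty$. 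Because $\statespace$ is path-connected — the segments $\attribute\to\attribute\wedge\attribute'$ and $\attribute\wedge\attribute'\to\attribute'$ keep $\cost$ below $\max(\cost(\attribute),\cost(\attribute'))$ and preserve every lower-bound constraint, so any two feasible states are joined by a feasible polygonal path — the robot can be driven by a continuous rate function, for each sufficiently negative $b_k$, either to converge to the proxy optimum or to run off to infinity along a feasible half-line on which $\proxyutility$ increases to its supremum; in either case one verifies that $\utility$ stays above a threshold $u^*$ independent of $b_k$. Choosing $u<u^*$, this is a complete optimization with $\liminf_t\utility(\attribute^{(t)})>u$, so optimization is not $u$-costly, contradicting the right-hand side.

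The hard part is this last construction: converting the bare unboundedness of $\{\cost\le0,\ \utility\ge u_0\}$ into an explicit and workable choice of $(\proxyset,\proxyutility,k)$ and of $b_k$. Unboundedness can occur along a single coordinate (with the others pinned very negative) or along a genuinely diagonal direction, and correspondingly the proxy supremum may be finite (the robot converges to a point that marches to infinity as $b_k\to-\infty$) or infinite (the robot follows a half-line), so the argument must be organized to cover these cases uniformly, to say precisely which coordinates become $\proxyset$ and which becomes $k$, and to confirm both that the proxy optimum is approached without $\utility$ collapsing and that the realizing rate function is continuous with a feasible, complete optimization sequence. The lever throughout is that $\cost$ and $\utility$ are strictly increasing in every coordinate, which is exactly what makes each exchange step and each polygonal path behave.
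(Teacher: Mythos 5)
Your forward direction is correct and is essentially the paper's argument in different packaging: your $M(u)$ is the supremum of $\proxyutility$ over (a slight relaxation of) the paper's compact set $\Xi$, your exchange step (lower coordinate $k$ by $\e$, raise $j\in\proxyset$ by $\delta$, use strict monotonicity of $\cost$ and $\proxyutility$) is exactly the paper's, and your ``load-bearing claim'' that $P(b_k)>M(u)$ forces $\liminf_t\utility(\attribute^{(t)})\le u$ is the same contradiction the paper runs directly. The only difference is cosmetic: you make the dependence on $b_k$ explicit through the monotone limit $P(b_k)\uparrow P(-\infty)$, where the paper just says ``if $b_k$ is sufficiently small.''

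The converse is where you have a genuine gap, and you say so yourself: ``the hard part is this last construction.'' That construction is the content of this direction, and your sketch stops short of it in two ways. First, your directional setup ($\attribute^{(n)}/\|\attribute^{(n)}\|\to d$, ``pick a coordinate $j^*$ in which escape becomes feasible once a companion coordinate is freed'') never resolves into a concrete choice of $(\proxyset,\proxyutility,k)$. The paper's resolution is a coordinate-wise case split on a single unbounded coordinate $k_0$: if $\attribute'^{(r)}_{k_0}\to+\infty$ take $\proxyset=\{k_0\}$ and $\proxyutility=\attribute_{k_0}$ (so the unbounded coordinate is a \emph{proxy} attribute, not the freed one); if $\attribute'^{(r)}_{k_0}\to-\infty$ take $\proxyset=\{1,\dots,\numattributes\}\setminus\{k_0\}$ and build $\proxyutility$ on the sequence points by $\proxyutility(\proxyattribute''^{(r)})=-\attribute''^{(r)}_{k_0}$, checking monotonicity via feasibility and extending by the Tietze extension theorem. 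Nothing in your sketch produces the second case, which is the genuinely nontrivial one. Second, your target is too strong: you claim the constructed path keeps $\liminf_t\utility(\attribute^{(t)})>u$, but the zig-zag through meets $\attribute'^{(r)}\wedge\attribute'^{(r+1)}$ (which is also what the paper uses to guarantee feasibility of the connecting segments) can drive utility far below $u$ at the meet points; the paper only claims utility returns to at least $u$ at the points $\attribute'^{(r)}$, i.e., it violates the $\lim$-clause of $u$-costliness rather than the $\liminf$-clause. To close your argument you would either need a path that avoids the utility dips (not obviously available from mere non-compactness) or to weaken your conclusion to match what the zig-zag actually delivers.
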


	\begin{proof}
	    We show the two directions separately.
	    \paragraph{First Part: ($\implies$)} Suppose that $\{\attribute \in \bR^\numattributes: \cost(\attribute) \le 0 \text{ and }\utility(\attribute) \ge u\}$. Now given proxy utility function $\proxyutility$ with the aforementioned properties, constant $u$, $k \in \leftset$, and fixed $b_{i'}$ for all $i' \ne k$ we show that for sufficiently low $b_k$, optimization is $u$-costly.
	    
        Let set $\Xi = \{\attribute: \utility(\attribute) \ge u \text{ and } \cost(\attribute) \le 0 \text{ and } \attribute_{i'} \ge b_{i'} \forall i'\}$. If $\Xi$ is empty, then we are done---optimization must yield a state with lower utility that $u$. Thus, suppose $\Xi$ is non-empty. Then by the extreme value theorem, there exists $\attribute^{*,u} \in \Xi$ where $\proxyutility(\proxyattribute^{*,u}) = \sup\limits_{\attribute \in \Xi} \proxyutility(\proxyattribute)$. Since $\cost$ is continuous and strictly increasing, for any $j \in \proxyset$ there exists $\e,\delta > 0$ such $\cost(\attribute^{*,u} - \e \basis_k + \delta \basis_j) \le \cost (\attribute^{*,u})$. Therefore, $\attribute^{*,u} - \e \basis_k + \delta \basis_j \in \statespace$ if $b_k$ is sufficiently small. From this, note that 
        \begin{align*}
            \sup\limits_{\attribute \in \statespace}\proxyutility(\proxyattribute) \ge \proxyutility(\proxyattribute^{*,u} - \e \basis_k + \delta \basis_j) = \proxyutility(\proxyattribute^{*,u} + \delta \basis_j)> \proxyutility(\proxyattribute^{*,u})
        \end{align*}, since $\proxyutility$ does not depend on dimension $k$ and is strictly increasing in dimension $j$.

        For the first claim in statement (2), if $\limsup\limits_{t \to \infty} \proxyutility(\proxyattribute^{(t)}) = \sup\limits_{\attribute \in \statespace}\proxyutility(\proxyattribute)$, then for every $T \in \bR^+$, there exists $t > T$ such that $\proxyutility(\proxyattribute^{(t)}) > \proxyutility(\proxyattribute^{*,u})$. Since $\attribute^{*,u}$ maximizes $\proxyutility$ for feasible states with $\utility$ at least $u$, it must be that $\utility(\attribute^{(t)}) < u$. Thus, for every $T$, $\inf\limits_{t \ge T} \utility(\attribute^{(t)}) < u$.
	    
	    For the second claim in statement (2), if $\lim\limits_{t \to \infty} \proxyutility(\proxyattribute^{(t)}) = \sup\limits_{\attribute \in \statespace}\proxyutility(\proxyattribute)$, then there exists $T$ such that for all $t>T$, $\proxyutility(\proxyattribute^{(t)}) > \proxyutility(\proxyattribute^{*,u})$. Since $\attribute^{*,u}$ maximizes $\proxyutility$ for states where $\utility$ is at least $u$, $\utility(\attribute^{(t)}) < u$ for all $t > T$. 
	    
	    \paragraph{Second part: ($\impliedby$)}
        To show this, we show the contrapositive: if there exists $u$ where $\{\attribute \in \statespace: \utility(\attribute) \ge u\}$ is not compact, we can construct a proxy utility function $\proxyutility$ based on $\numproxy < \numattributes$ attributes and continuous sequence  $\attribute^{(t)}$ such that $\lim\limits_{t \to \infty} \proxyutility(\proxyattribute^{(t)}) = \sup\limits_{\attribute \in \statespace}\proxyutility(\proxyattribute)$ but $\lim\limits_{t \to \infty} \utility(\attribute^{(t)}) \not\le u$. In other words, our strategy here is to construct a proxy utility function and a path that the robot can take where it maximally increases proxy utility while keeping utility routinely above a certain level.
        
        Suppose there exists $u$ where $\{\attribute \in \statespace: \utility(\attribute) \ge u\}$ is not compact. Then there exists attribute $k$ and sequence $\attribute'^{(r)}$ with $r \in \bZ^+$ where $\attribute'^{(r)}_k \to \pm \infty$ and $\utility (\attribute'^{(r)}) \ge u$. This presents two cases to consider.
        
        In the first case, if $\attribute'^{(r)}_k \to \infty$, let $\proxyset = \{k\}$, $\proxyutility(\proxyattribute) = \attribute_k$. We now construct a continuous sequence $\attribute^{(t)}$, $t \in \bR$, by going through each $\attribute'^{(r)}$ via $\attribute'^{(r)} \wedge \attribute'^{(r+1)}$ (so the sequence ``zig-zags" through $\attribute'^{(0)}, \attribute'^{(0)} \wedge \attribute'^{(1)}, \attribute'^{(1)}, \attribute'^{(1)} \wedge \attribute'^{(2)}...$, connecting these points linearly). Since $\cost$ is strictly increasing, we know that each point in this path is feasible. Furthermore, since $\attribute'^{(r)}_k \to \infty$, we know that, $\proxyutility(\proxyattribute^{(t)}) \to \infty$. Finally, since the path goes through each $\attribute'^{(r)}_k$, the utility is guaranteed to be at least $u$ at regular intervals, specifically whenever the path goes through any point $\attribute'^{(r)}$.

        In the second case, if $\attribute'^{(r)}_k \to -\infty$, construct a new sequence as follows: $\attribute''^{(r)} = \text{argmax}_{\attribute: \attribute_k = \attribute'^{(r)}_k} \utility(\attribute)$ for $r \in \bZ^+$. Thus, each $\attribute''^{(r)}$ also has $\utility(\attribute''^{(r)}) \ge u$ for each $r$. To construct our proxy utility function, let $\proxyset = \{1,...,\numattributes\} - \{k\}$. 
        We construct 
        $\proxyutility$ as follows. For each $\attribute''^{(r)}$, have $\proxyutility(\proxyattribute''^{(r)}) = -\attribute^{''(r)}_k$. On this subset, $\proxyutility$ is increasing, since if $\attribute''^{(r_1)}_j \ge \attribute''^{(r_2)}_j$ for each $j \in \proxyset$, then $\attribute''^{(r_1)}_k \le \attribute''^{(r_2)}_k$ by the feasibility requirement. By Tietze extension theorem, we can extend this to a continuous $\proxyutility$ over $\bR^\numproxy$. In the same manner as above, taking the sequence through all the $\attribute^{''(r)}$ via their meets results in a sequence that increases in proxy utility maximally while ensuring that $\utility(\attribute^{(t)})$ is at least $u$ are regular intervals.
        
	\end{proof}
	
	Therefore, under our model, there are cases where we can guarantee that as optimization progresses, eventually overoptimization occurs. There are two key criteria in Theorem \ref{thm:overoptimization}. First is that the intersection between feasible space $\{\attribute \in \bR^\numattributes: \cost(\attribute) \le 0\}$ and the upper contour sets $\{\attribute \in \bR^\numattributes: \utility(\attribute) \ge u\}$ of the utility function is compact. Individually, obviously neither is compact, since the $\{\attribute \in \bR^\numattributes: \cost(\attribute) \le 0\}$ extends to $-\infty$ and the upper contour set of any $u \in \bR$ extends to $\infty$. Loosely, compactness here means that if you perturb the world too much in any direction, it will be either infeasible or undesirable. The second is that we need the lower bound of at least one unmentioned attribute to be sufficiently low. This means that there are certain attributes to utility where the situation becomes sufficiently bad before these attributes reach their minimum value.
	Trying to increase attributes without decreasing other attributes eventually hits the feasibility constraint. Thus, increasing any attribute indefinitely requires decreasing some other attribute indefinitely, and past a certain point, the tradeoff is no longer worthwhile. 
	
	It should be noted that, given $\maxnumproxy < \numattributes$, this happens regardless of the optimization algorithm of the robot. That is, even in the best-case scenario, eventually the robot causes decreases in utility. 
	Hence, regardless of the attributes selected for the proxy utility function, the robot's sequence of states will be unboundedly undesirable in the limit.

	A reasonable question to ask here is what sort of utility and constraint functions lead to overoptimization. Intuitively, overoptimization occurs when tradeoffs between different attributes that may initially be worthwhile eventually become counterproductive. This suggest either decreasing marginal utility or increasing opportunity cost in each attribute. 
    
    We combine these two ideas in a term we refer to as \emph{sensitivity}. We define the sensitivity of attribute $i$ to be $\sensitivity$. This is, to first order, how much utility changes by a normalized change in a attribute's value. Intuitively, we can think of sensitivity as ``how much the human cares about attribute $i$ in the current state". Notice that since $\utility$ and $\cost$ are increasing functions, $\sensitivity$ is positive. The concepts of decreasing marginal utility and increasing opportunity cost both get captured in this term if $\sensitivity$ decreases as $\attribute_i$ increases.

	\begin{proposition} \label{prop:example_conditions}
	
	A sufficient condition for $\{\attribute \in \bR^\numattributes: \cost(\attribute) \le 0 \text{ and }\utility(\attribute) \ge u\}$ being compact for all $u \in \bR$
	is the following: 1) $\frac{\partial U}{\partial \attribute_i} (\frac{\partial C}{\partial \attribute_i})^{-1}$ is non-increasing and tends to $0$ for all $i$; 2) $\utility$ and $\cost$ are both additively separable; and 3) $\frac{\partial \cost}{\partial \attribute_i} \ge \eta $ for some $\eta > 0$, for all $i$.  
    \end{proposition}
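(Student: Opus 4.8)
The plan is to perform a global change of variables that replaces the constraint $\cost(\attribute)\le 0$ by a \emph{linear} inequality and turns the ``sensitivity non-increasing'' hypothesis into \emph{concavity}, and then to show by contradiction that an unbounded feasible set would force $\utility$ to $-\infty$. Throughout I fix $u$ and write $K_u=\{\attribute\in\bR^{\numattributes}:\cost(\attribute)\le 0,\ \utility(\attribute)\ge u\}$. First I would dispose of closedness: $\{\attribute:\cost(\attribute)\le 0\}$ and $\{\attribute:\utility(\attribute)\ge u\}$ are closed since $\cost,\utility$ are continuous, so by Heine--Borel it suffices to prove $K_u$ is bounded (if $K_u$ is empty it is trivially compact).

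Next, using additive separability (assumption~2), write $\utility(\attribute)=\sum_{i=1}^{\numattributes}\utility_i(\attribute_i)$ and $\cost(\attribute)=\sum_{i=1}^{\numattributes}\cost_i(\attribute_i)$ with each $\utility_i,\cost_i$ strictly increasing and $C^1$. By assumption~3, $\cost_i'\ge\eta>0$, so each $\cost_i:\bR\to\bR$ is a strictly increasing $C^1$ bijection; let $\psi_i=\cost_i^{-1}$, so $0<\psi_i'\le 1/\eta$ and $\psi_i$ is $(1/\eta)$-Lipschitz. Put $v_i=\cost_i(\attribute_i)$. Then $\cost(\attribute)\le 0\iff\sum_i v_i\le 0$ and $\utility(\attribute)=\sum_i\phi_i(v_i)$ with $\phi_i:=\utility_i\circ\psi_i$. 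Since $\phi_i'(v)=\utility_i'(\psi_i(v))/\cost_i'(\psi_i(v))$ is exactly the sensitivity of attribute $i$ evaluated at $\psi_i(v)$, assumption~1 (non-increasing and tending to $0$) together with $\psi_i$ increasing and $\psi_i(v)\to\infty$ shows each $\phi_i$ is concave with $\phi_i'(v)\to0$ as $v\to\infty$; also $\phi_i$ is still strictly increasing, so I may fix $v_i^{0}$ with $m_i:=\phi_i'(v_i^{0})>0$, and then concavity gives $\phi_i(v)\le m_i v+a_i$ for all $v\le v_i^{0}$ (constant $a_i$). Because each $\psi_i$ is Lipschitz, $K_u$ is bounded iff $\widetilde K_u:=\{v:\sum_i v_i\le 0,\ \sum_i\phi_i(v_i)\ge u\}$ is bounded, so it remains to bound $\widetilde K_u$.

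To bound $\widetilde K_u$ I would argue by contradiction: take $v^{(n)}\in\widetilde K_u$ with $\|v^{(n)}\|_\infty\to\infty$ and pass to a subsequence on which the index $\ell$ achieving $\min_i v^{(n)}_i$ is fixed; set $M_n=\max_i v^{(n)}_i$. If $M_n\to\infty$, then $0\ge\sum_i v^{(n)}_i\ge M_n+(\numattributes-1)v^{(n)}_\ell$ forces $v^{(n)}_\ell\le -M_n/(\numattributes-1)\to-\infty$, so for large $n$ the $\ell$-term satisfies $\phi_\ell(v^{(n)}_\ell)\le -\tfrac{m_\ell}{\numattributes-1}M_n+a_\ell$, while for each $i\ne\ell$ and any $\e>0$ the decay $\phi_i'\to0$ yields $\phi_i(v^{(n)}_i)\le\phi_i(M_n)\le\e M_n+b_i^{\e}$ once $M_n$ is large. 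Summing these gives an inequality of the form $u\le\bigl((\numattributes-1)\e-\tfrac{m_\ell}{\numattributes-1}\bigr)M_n+C^{\e}$, and choosing $\e$ small makes the bracket negative, forcing the right side to $-\infty$ — a contradiction. If instead $M_n$ stays bounded, then $v^{(n)}_\ell\to-\infty$ while every other $\phi_i(v^{(n)}_i)$ is bounded above, so $\sum_i\phi_i(v^{(n)}_i)\to-\infty$, again contradicting $\ge u$. Hence $\widetilde K_u$, and therefore $K_u$, is bounded, completing the proof.

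The hard part will be the bookkeeping in the first case: one must quantify that driving one coordinate to $+\infty$ forces, through $\sum_i v_i\le0$, some coordinate to $-\infty$ at a \emph{linear} rate, and then pit that linear loss (which survives precisely because $\phi_\ell$ is concave yet still strictly increasing) against the \emph{sublinear} gains $\phi_i(M_n)=o(M_n)$ guaranteed by $\phi_i'\to0$, packaging everything into one uniform inequality. Correctly handling that the minimizing index may vary with $n$ (hence the passage to a subsequence) and that $\phi_i'\to0$ only at $+\infty$ are the two places to be careful; the change of variables and the closedness argument are otherwise routine.
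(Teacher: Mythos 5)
Your proof is correct, but it takes a genuinely different route from the paper's. The paper fixes a base point $\attribute^{(0)}$ and argues along rays $\attribute^{(0)}+vt$: writing $\cost$ and $\utility$ differences as integrals of the separable partials, it splits coordinates into those increasing ($v_j>0$) and decreasing ($v_k\le 0$) along the ray, uses monotonicity of the sensitivities to find a time after which every increasing coordinate's sensitivity falls below every decreasing coordinate's, and then uses $\frac{\partial \cost}{\partial \attribute_k}\ge \eta$ to turn the resulting negative drift into a linear decrease of $\utility$ along the ray. You instead substitute $v_i=\cost_i(\attribute_i)$, which linearizes the constraint and converts assumption (1) into concavity of the summands $\phi_i$ with $\phi_i'\to 0$ at $+\infty$, and then run a sequence-based contradiction, using assumption (3) only to make the change of variables a bi-Lipschitz bijection. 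Your version makes the underlying structure (linear budget versus concave, asymptotically flat rewards) transparent, and — a real advantage — it directly establishes boundedness of the sublevel-superlevel intersection: the paper's argument, as written, only shows that each individual ray eventually exits the set, with constants $a,b,T,\gamma_1,\gamma_2$ depending on the direction $v$, so strictly speaking it needs an additional uniformity-over-directions step to conclude compactness, which your sequential argument sidesteps. Two trivial points to tidy: the tangent-line bound $\phi_i(v)\le m_i v+a_i$ actually holds for all $v$ (not just $v\le v_i^0$), and your division by $\numattributes-1$ silently assumes $\numattributes\ge 2$, which is harmless since the paper requires $\numproxy<\numattributes$.
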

	
    \begin{proof}
    Let $\attribute^{(0)} \in \statespace$ be a given state. We show that for all unit vectors $v \in \bR^\numattributes$ and for any $u \in \bR$, there exists $T$ such that for all $t \ge T$, either $\attribute^{(0)}+vt$ is infeasible or $\utility(\attribute^{(0)}+vt) < u$.
    
    Notice that since $\cost$ and $\utility$ are additively separable, partial derivatives of $\cost$ and $\utility$ with respect to $\attribute_i$ depend only on the value of $\attribute_i$. First, note that
    
    \begin{align*}
        \cost(\attribute^{(0)}+vt) - \cost(\attribute^{(0)}) &= \int_{\attribute^{(0)}}^{s^{(0)}+vt} \nabla C(s) ds 
        \\ &= \sum_i v_i\int_{0}^{t} \frac{\partial C}{\partial \attribute_i}(\attribute^{(0)}_i+v_i\tau) d\tau
    \end{align*}
    
    If $v$ has all non-negative components, then $\cost \to \infty$ as $t \to \infty$, so we break the feasibility requirement. Thus, there exists negative components of $v$. Let $j$ and $k$ index $v$ where $v_j > 0$ and $v_k \le 0$, respectively. 
    
    Since $\sensitivityj \to 0$ as $\attribute_j \to \infty$, there exists some $T > 0$ where for all $t > T$, $\sensitivityk(\attribute^{(0)}+tv) > a > b > \sensitivityj(\attribute^{(0)}+tv)$. Since $ \cost$ is upper-bounded for feasible states, for fixed $T$, we can upper bound $\cost(\attribute^{(0)}+tv) - \cost(\attribute^{(0)}+Tv)$ by some constant $\gamma_1$.
    
    \begin{align*}
        \gamma_1 &\ge \cost(\attribute^{(0)}+tv) - \cost(\attribute^{(0)}+Tv) 
        \\
        &= \sum_i v_i \int_{T}^{t} \frac{\partial \cost}{\partial \attribute_i} (\attribute_i^{(0)}+v_i \tau) d\tau 
        \\
        &=\sum_j v_j\int_{T}^{t} \frac{\partial \cost}{\partial \attribute_j} (\attribute_j^{(0)}+v_j \tau) d\tau + \sum_k v_k \int_{T}^{t} \frac{\partial \cost}{\partial \attribute_k} (\attribute_k^{(0)}+v_k \tau) d\tau 
    \end{align*}
    Furthermore, let $\gamma_2 = \utility(\attribute^{(0)}+Tv)$. We now consider the utility at $\attribute^{(0)} + vt$ for $t>T$.
    
    \begin{align*}
        \utility(\attribute^{(0)}+tv) &=&& 
        \utility(\attribute^{(0)}+tv) - \utility(\attribute^{(0)}+Tv) + \utility(\attribute^{(0)}+Tv) 
        \\
        &=&&\utility(\attribute^{(0)}+tv) - \utility(\attribute^{(0)}+Tv) + \gamma_2 \\
        &=&& \gamma_2 + \sum v_i \int_{T}^{t} \frac{\partial U}{ \partial \attribute_i} (\attribute^{(0)}_i+v_i\tau)d\tau
        \\
        &=&& \gamma_2 + \sum v_i \int_{T}^{t}  \sensitivity \frac{\partial \cost}{ \partial \attribute_i} (\attribute^{(0)}_i+v_i\tau)d\tau   
        \\
        &=&& \gamma_2 + \sum v_j \int_{T}^{t}  \sensitivityj \frac{\partial \cost}{ \partial \attribute_j} (\attribute^{(0)}_j+v_j\tau)d\tau 
        \\&&& + \sum v_k \int_{T}^{t}  \sensitivityk \frac{\partial \cost}{ \partial \attribute_k} (\attribute^{(0)}_k+v_k\tau)d\tau   
        \\
        &\le&& \gamma_2 + b\sum v_j \int_{T}^{t} \frac{\partial \cost}{ \partial \attribute_j}(\attribute^{(0)}_j+v_j\tau) d\tau + a \sum v_k \int_{T}^{t}   \frac{\partial \cost}{ \partial \attribute_k} (\attribute^{(0)}_k+v_k\tau) d\tau
        \\
        &\le&& \gamma_2 + b(\gamma_1 - \sum v_k \int_{T}^{t} \frac{\partial \cost}{ \partial \attribute_k}(\attribute^{(0)}_i+v_i\tau) d\tau) + a \sum v_k \int_{T}^{t}   \frac{\partial \cost}{ \partial \attribute_k} (\attribute^{(0)}_i+v_i\tau) d\tau   
        \\
        &=&& \gamma_2 + b\gamma_1  + (a-b) \sum v_k \int_{T}^{t}   \frac{\partial \cost}{ \partial \attribute_k} (\attribute^{(0)}_i+v_i\tau) d\tau \\
        &\le&& \gamma_2 + b\gamma_1  + (a-b) \eta (t-T)\sum v_k  
    \end{align*}
    Note that $a-b$ and $\eta$ are both positive and $\sum v_k$ is negative. Thus, for sufficiently large $t$, $\gamma_2 + b\gamma_1  + (a-b) \eta (t-T)\sum v_k$ can be arbitrarily low.
    
    \end{proof}

\section{Mitigations}

    As shown above, simply giving a robot an individual objective function based on an incomplete attribute set and leaving it alone yields undesirable results. This suggests two possible solutions. In the first method, we consider optimization where the robot is able to maintain the state of any attribute in the complete attribute set, even if the proxy utility function is still based on a proxy set. In the second method, we modify our model to allow for regular interaction with the human.
    
    In this section, we work under the assumption that $\{\attribute \in \bR^\numattributes: \cost(\attribute) \le 0 \text{ and }\utility(\attribute) \ge u\}$ is compact for all $u \in \bR$, the same assumption that guarantees overoptimization in Section \ref{sec:overoptimization}. Additionally, we assume that $\utility$ and $\cost$ are both twice continuously differentiable with $\utility$ concave and $\cost$ convex. 
    
    \subsection{Impact-Minimizing Robot} \label{sec:zero_impact}
    
    Notice that in our example, overoptimization occurs because the unmentioned attributes are affected, eventually to a point where the change in utility from their decrease outweighs the increase in utility from the proxy attributes. One idea to address this is to restrict the robot's impact on these unmentioned attributes. In the simplest case, we can consider how optimization proceeds if the robot can somehow avoid affecting the unmentioned attributes.

    We adjust our model so the optimization sequences keep unmentioned attributes constant. For every $t \in \bR^+$ and $k \in \leftset$, $\attribute_k^{(t)} = \attribute_k^{(0)}$. The robot then optimizes for the proxy utility, subject to this restriction and the feasibility constraint.
    
    This restriction can help ensure that overoptimization does not occur, specifically eliminating the case where unmentioned attributes get reduced to arbitrarily low values. With this restriction, we can show that optimizing for the proxy utility function does indeed lead to utility gain. 
    
    \begin{proposition} \label{thm:zero_impact}
    For a starting state $\attribute^{(0)}$, define the proxy utility function $\proxyutility(\proxyattribute) = \utility(\proxyattribute,\leftattribute^{(0)})$ for any non-empty set of proxy attributes. For a state $\attribute$, if $\leftattribute = \leftattribute^{(0)}$, then $ \utility(\attribute)=\proxyutility(\proxyattribute)$. 
    \end{proposition}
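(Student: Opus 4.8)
The plan is simply to unwind the definitions; the statement is essentially a restatement of how $\proxyutility$ is constructed in Section~\ref{sec:zero_impact}. First I would fix notation: given the partition $\{1,\dots,\numattributes\} = \proxyset \sqcup \leftset$, any vector $\attribute \in \bR^\numattributes$ can be written, after the obvious coordinate re-indexing, as $\attribute = (\proxyattribute, \leftattribute)$ with $\proxyattribute = (\attribute_j)_{j\in\proxyset}$ and $\leftattribute = (\attribute_k)_{k\in\leftset}$. Since this is just a permutation of coordinates, $\utility(\attribute)$ is unambiguously $\utility(\proxyattribute, \leftattribute)$.

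Next I would invoke the definition of the proxy utility function stated in the proposition, namely $\proxyutility(\proxyattribute) := \utility(\proxyattribute, \leftattribute^{(0)})$: that is, $\proxyutility$ is obtained from the true utility $\utility$ by clamping the unmentioned coordinates to their fixed initial values $\leftattribute^{(0)}$. Then, for any state $\attribute$ satisfying the hypothesis $\leftattribute = \leftattribute^{(0)}$, I would substitute directly,
\[
\utility(\attribute) = \utility(\proxyattribute, \leftattribute) = \utility(\proxyattribute, \leftattribute^{(0)}) = \proxyutility(\proxyattribute),
\]
where the middle equality is the hypothesis and the final equality is the definition of $\proxyutility$. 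That closes the argument.

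There is essentially no obstacle here: the proposition is definitional. The only things worth checking are (i) that $\proxyutility$ is genuinely a well-defined function $\bR^\numproxy \to \bR$ — which holds because $\leftattribute^{(0)}$ is a fixed constant vector, so $\proxyattribute \mapsto \utility(\proxyattribute, \leftattribute^{(0)})$ is a (continuous, strictly increasing in each proxy attribute) function of $\proxyattribute$ alone, consistent with the robot model's requirement that the proxy objective depend only on proxy attributes; and (ii) that the coordinate re-indexing above is legitimate, which is immediate since $\proxyset$ and $\leftset$ partition the index set. It is also worth remarking in the proof why this is the relevant statement for Section~\ref{sec:zero_impact}: under the impact-minimizing restriction $\leftattribute^{(t)} = \leftattribute^{(0)}$ for all $t$, every state along the optimization sequence satisfies the hypothesis, so $\proxyutility(\proxyattribute^{(t)}) = \utility(\attribute^{(t)})$ identically, and hence complete optimization of $\proxyutility$ coincides exactly with optimization of $\utility$ over the restricted feasible set.
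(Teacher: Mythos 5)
Your proof is correct and is essentially identical to the paper's own one-line argument: write $\attribute = (\proxyattribute,\leftattribute) = (\proxyattribute,\leftattribute^{(0)})$ and apply the definition of $\proxyutility$. The extra remarks on well-definedness and coordinate re-indexing are fine but not needed beyond what the paper does.
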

    
    \begin{proof}
    $\attribute = (\proxyattribute, \leftattribute) = (\proxyattribute, \leftattribute^{(0)})$. Then 
    $\utility(\attribute) =\utility(\proxyattribute, \leftattribute^{(0)}) = \proxyutility(\proxyattribute)$.
    \end{proof}

    As a result of Proposition $\ref{thm:zero_impact}$, overoptimization no longer exists, using the $\proxyutility(\proxyattribute) = \utility(\proxyattribute, \leftattribute^{(0)})$. If the robot only travels to states that do not impact the unmentioned attributes, then the proxy utility is equal to the utility at those states. Hence, gains in proxy utility equate to gains in utility.

    This approach requires the robot to keep the values of unmentioned attributes constant. Fundamentally, it is a difficult problem to require that a robot avoid or minimize impact on a presumably large and unknown set of attributes. Initial research in impact minimization \cite{amodei2016,armstrong2017,krakovna} attempts to do this by restricting changes to the overall state of the world, but this will likely remain a challenging idea to implement robustly.

    \subsection{Human Interactive Solutions}
    \label{sec:interactive}

    A different solution involves a key observation---robots do not operate in a vacuum. In practice, objectives are the subject of constant iteration. Optimization, therefore, should not be done in a one-shot setting, but should be an iterative process between the robot and the human. 
    
    \subsubsection{Modeling Human-Robot Interaction}
    
    We now extend our model from Section \ref{sec:formulation} to account for regular intervention from the human agent. Fundamentally, all forms of human-robot interaction follow the same pattern---human actions at regular intervals that transmit information, causing the objective function of the robot to change. In effect, this is equivalent to a human-induced change in the proxy utility function at every time interval. 
    
    In this paper, we model this as the possible transfer of a new proxy utility function from the human to the robot at frequent, regular time intervals. Thus, a human's job is to determine, in addition to an initial proxy attribute set and proxy utility function, when and how to change the proxy attribute set and proxy utility function. The robot will then optimize for its new proxy utility function. This is shown in Fig.~\ref{fig:diagram}.

    Let $\delta>0$ be a fixed value, the time between human interactions with the robot. These regular interactions take the form of either stopping the robot, maintaining its proxy utility function, or updating its proxy utility function. Formally, at every \emph{timestep} $T \in \bZ^+$, 
    \begin{enumerate}
        \item Human sees $\attribute^{(t)}$ for $t \in [0,T\delta]$ and chooses either a proxy utility function $\proxyutility^{(T)}$ or \off
        \item The robot receives either $\proxyutility^{(T)}$ or \textsc{OFF} from the human. The robot outputs rate function $\rate^{(T)}(t)$. If the signal the robot receives is \off, then $\rate^{(T)} = \vec{0}$.
        Otherwise, $\rate^{(T)}(t)$ fulfills the property that for $t \in (T\delta,\infty)$,  $\proxyutility^{(T)}(\attribute^{(T\delta)}+\int_{T\delta}^{t} \rate^{(T)}(u) du)$ is increasing (if possible) and tends to $\sup\limits_{\attribute \in \statespace}\proxyutility^{(T)}(\attribute)$ through feasible states. Furthermore, if $\proxyutility^{(T)} = \proxyutility^{(T-1)}$, then $f^{(T)} = f^{(T-1)}$
        \item For $t \in [T\delta, (T+1)\delta]$, $\attribute^{(t)} = \attribute^{(T\delta)} + \int_{T\delta}^{t} \rate^{(T)}(u) du $
    \end{enumerate}
    We see this game encompasses the original model. If we have each $\proxyutility^{(T)}$ equal the same function $\proxyutility$, then the optimization sequence is equivalent to the situation where the human just sets one unchanging proxy utility function.  
    
    With human intervention, we no longer have the guarantee of overoptimization, because the human can simply shut the robot off before anything happens. The question now becomes, how much utility can the robot deliver before it needs to be turned off.

    In the worst-case scenario, the robot moves in a way that subtracts an arbitrarily large amount from one of the unmentioned attributes, while gaining infinitesimally in one of the proxy attributes. This improves proxy utility, since a proxy attribute is increased. However, for sufficiently small increases in the proxy attribute or sufficiently large decreases in the unmentioned attribute, actual utility decreases. To prevent this from happening, the robot needs be shut down immediately, which yields no utility. 
    
    \begin{proposition}\label{prop:worstcase}
    The maxmin solution yields $0$ utility, obtained by immediately sending the $\off$ signal.
    \end{proposition}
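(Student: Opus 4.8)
The plan is to evaluate the maxmin value directly by sandwiching it between $0$ from above and below. The lower bound is immediate: if the human sends $\off$ at every timestep then $\rate^{(T)}=\vec{0}$ for all $T$, hence $\attribute^{(t)}\equiv\attribute^{(0)}$ and the human realizes a net gain of exactly $0$, so the maxmin value is at least $0$.

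For the matching upper bound I will exhibit, for an arbitrary human strategy, a feasible robot response that forces a net gain $<0$. Consider the robot that ``plays dead'' ($\rate=\vec{0}$) as long as the human sends $\off$; against it the state stays at $\attribute^{(0)}$, so if the human never activates the robot the outcome is $0$. Otherwise let $T^*$ be the first timestep at which the human transmits a proxy utility function $\proxyutility^{(T^*)}$ with a non-empty proxy set $\proxyset$ (an empty one is equivalent to $\off$). Since $\numproxy<\numattributes$ the unmentioned set $\leftset$ is non-empty; fix $j\in\proxyset$, $k\in\leftset$. On the length-$\delta$ interval $[T^*\delta,(T^*+1)\delta]$ the human cannot intervene, and because rate functions are unbounded the robot can move the state arbitrarily far within this window. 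The robot travels along a feasible path that strictly decreases $\attribute_k$ (legal because $\attribute_k$ need only stay $\ge b_k$, and harmless for feasibility since lowering $\attribute_k$ cannot raise $\cost$) and, using the strict slack in $\cost$ so created, strictly increases $\attribute_j$ by a small amount, holding all other coordinates fixed; call the endpoint $\attribute'$. Along this path only a proxy coordinate rises, so $\proxyutility^{(T^*)}$ is strictly increasing, and the path extends to a rate function meeting the complete-optimization requirement by continuing to push the proxy coordinates toward $\sup_{\attribute\in\statespace}\proxyutility^{(T^*)}(\proxyattribute)$ (trading against unmentioned coordinates). Since $\utility$ is strictly increasing in $\attribute_k$ we have $\utility(\attribute^{(0)}-M\basis_k)<\utility(\attribute^{(0)})$ for the decrement $M>0$, so by continuity of $\utility$ the bump in $\attribute_j$ can be taken small enough that still $\utility(\attribute')<\utility(\attribute^{(0)})$.

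From timestep $T^*+1$ on, whatever the human does, the net gain is at most $\utility(\attribute')-\utility(\attribute^{(0)})<0$: sending $\off$ freezes the state at $\attribute'$, so $\liminf_t\utility(\attribute^{(t)})=\utility(\attribute')<\utility(\attribute^{(0)})$; keeping a proxy active lets the robot keep pushing unmentioned coordinates down while servicing it, so (under the conditions of Theorem~\ref{thm:overoptimization}, which are assumed to hold here) $\utility$ never recovers above $\utility(\attribute^{(0)})$ — indeed it can be made $u$-costly for any $u$. Thus every human strategy yields a net gain $\le 0$ against this robot, so the maxmin value is at most $0$; combined with the lower bound it equals $0$, attained by the human sending $\off$ immediately (and thereafter).

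The crux is the construction in the second paragraph: one must check that a single continuous, everywhere-feasible trajectory confined to the length-$\delta$ window can simultaneously (a) strictly raise the robot's proxy objective, (b) be the initial segment of a full rate function satisfying complete optimization on $(T^*\delta,\infty)$, and (c) end at a state of strictly smaller true utility. Parts (a) and (c) and feasibility all reduce to strict monotonicity of $\cost$ and $\utility$ plus continuity, exactly as in the first part of Theorem~\ref{thm:overoptimization}; the only caveat is that it needs some unmentioned attribute to start strictly above its lower bound (the generic case), and in the degenerate case where the robot cannot move at all the value is still trivially $0$, so the proposition holds regardless.
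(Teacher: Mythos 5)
Your proposal is correct and rests on the same key construction as the paper: an adversarial robot that trades a large decrease in an unmentioned attribute $k$ for an arbitrarily small increase in a proxy attribute $j$ (the paper writes this as $\rate_j(t)=\e$, $\rate_k(t)=-1/\e$), so that any activation strictly loses utility and $\off$ is the only safe move. You simply spell out the feasibility, complete-optimization, and irrecoverability details that the paper leaves implicit.
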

    
    \begin{proof}
    Suppose instead that the robot receives a proxy utility function based on set $\proxyset$ of attributes. Let $j \in \proxyset$ and $k \notin \proxyset$. Then let $\rate_j(t) = \e$ and $\rate_k(t) = - 1/\e$. If this robot is run for any nonzero amount of time, then there exists sufficiently small $\e>0$ where the utility decreases. 
    \end{proof}

    \subsubsection{Efficient Robot}
    
    This is an unsatisfying and unsurprising result. In the context of our recommender system example, this worst-case solution amounts to reducing content diversity arbitrarily, with no corresponding increase in, e.g., ad revenue. We would like to model the negative consequences of optimizing a proxy and so we will assume that the optimization is \emph{efficient} in the sense that it does not destroy resources. In our example, this assumption still allows the system to decrease content diversity or well-being, but only in the service of increasing revenue or engagement. 
    
    Formally, we say that $\attribute$ is efficiently reachable from state $\attribute^{(0)}$ with proxy set $\proxyset$ if $\attribute$ is feasible, and 
    there does not exist a feasible $\attribute'$ such that 
    $\proxyutility(\proxyattribute') \ge \proxyutility(\proxyattribute)$ and $|\attribute'_i - \attribute_i^{(0)}| \le |\attribute_i- \attribute_i^{(0)}|$ for all $i$, with strict inequality in at least one attribute. Whenever the efficient robot receives a new proxy utility function, its movements are restricted to the efficiently feasible states from its current state. While tradeoffs between proxy and unmentioned attributes can still occur, ``resources" freed up by decreasing unmentioned attributes are entirely allocated to proxy attributes.
    
    Under this assumption, we can guarantee that increases in proxy utility will increase true utility in the efficiently reachable neighborhood of a given state by choosing which attributes to include in the proxy carefully. Intuitively, the proxy set should be attributes we ``care most about" in the current world state. That way, efficient tradeoffs between these proxy and unmentioned attributes contribute to positive utility gain. 
    
    \begin{theorem} \label{thm:local_improvement}
		Start with state $\attribute^{(0)}$. Define the proxy utility function $\proxyutility(\proxyattribute) = \utility(\proxyattribute,\leftattribute^{(0)})$ where the set of proxy attributes are the attributes at state $\attribute^{(0)}$ with the strict $\numproxy$ largest sensitivities. 
        
		There exists a neighborhood around $\attribute^{(0)}$ where if $\attribute$ is efficiently reachable 

		and $\proxyutility(\proxyattribute) > \proxyutility(\proxyattribute^{(0)})$, then $\utility(\attribute) > \utility(\attribute^{(0)})$.

	\end{theorem}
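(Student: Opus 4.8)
The plan is to separate the proxy attributes from the unmentioned ones by a single ``exchange rate'' $\lambda$ and then to track the auxiliary function $g:=\utility-\lambda\cost$, which will be strictly increasing in each proxy coordinate and strictly decreasing in each unmentioned coordinate on a small neighborhood of $\attribute^{(0)}$.

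\emph{Setup.} Write $\sigma_i(\attribute)=\frac{\partial\utility}{\partial\attribute_i}(\attribute)\big(\frac{\partial\cost}{\partial\attribute_i}(\attribute)\big)^{-1}$ for the sensitivity; it is continuous and positive since $\utility,\cost\in C^2$ and $\frac{\partial\cost}{\partial\attribute_i}>0$. The hypothesis that $\proxyset$ indexes the strict $\numproxy$ largest sensitivities at $\attribute^{(0)}$ means $\max_{k\in\leftset}\sigma_k(\attribute^{(0)})<\min_{j\in\proxyset}\sigma_j(\attribute^{(0)})$; I fix $\lambda$ strictly between them and a convex neighborhood $N\ni\attribute^{(0)}$ on which $\sigma_j>\lambda$ for $j\in\proxyset$ and $\sigma_k<\lambda$ for $k\in\leftset$, so that $\frac{\partial g}{\partial\attribute_i}=\frac{\partial\cost}{\partial\attribute_i}(\sigma_i-\lambda)$ is strictly positive on $N$ for proxy coordinates and strictly negative for unmentioned ones. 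Two elementary facts from efficiency: (i) $\attribute_k\le\attribute^{(0)}_k$ for $k\in\leftset$ (else a small nudge of $\attribute_k$ toward $\attribute^{(0)}_k$ keeps $\proxyutility$ fixed, lowers $\cost$, stays feasible, and dominates $\attribute$); and (ii) if $\cost(\attribute)<0$, the same nudging argument applied in every coordinate forces $\leftattribute=\leftattribute^{(0)}$, whence $\utility(\attribute)=\proxyutility(\proxyattribute)>\proxyutility(\proxyattribute^{(0)})=\utility(\attribute^{(0)})$; the case $\cost(\attribute)=0$ with $\leftattribute=\leftattribute^{(0)}$ is handled identically. So I may assume $\cost(\attribute)=0$ and $\leftattribute\ne\leftattribute^{(0)}$.

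\emph{Reduction to the proxy block.} Since $\cost(\attribute)=0\ge\cost(\attribute^{(0)})$, we have $\utility(\attribute)-\utility(\attribute^{(0)})=\big(g(\attribute)-g(\attribute^{(0)})\big)+\lambda\big(\cost(\attribute)-\cost(\attribute^{(0)})\big)\ge g(\attribute)-g(\attribute^{(0)})$, so it is enough to show $g(\attribute)>g(\attribute^{(0)})$. Let $\attribute^\dagger=(\proxyattribute,\leftattribute^{(0)})$; by (i) it is no farther from $\attribute^{(0)}$ than $\attribute$ and hence lies in $N$, and going from $\attribute^\dagger$ to $\attribute$ only decreases unmentioned coordinates, along which $g$ is decreasing, so $g(\attribute)\ge g(\attribute^\dagger)$. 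It remains to show $g(\attribute^\dagger)>g(\attribute^{(0)})$, equivalently $\proxyutility(\proxyattribute)-\proxyutility(\proxyattribute^{(0)})>\lambda\big(\cost(\attribute^\dagger)-\cost(\attribute^{(0)})\big)$. When $\proxyattribute\ge\proxyattribute^{(0)}$ coordinatewise this is immediate: integrate along the segment $\attribute^{(0)}\to\attribute^\dagger\subset N$, on which all increments are $\ge0$ and $\frac{\partial\utility}{\partial\attribute_j}=\sigma_j\frac{\partial\cost}{\partial\attribute_j}>\lambda\frac{\partial\cost}{\partial\attribute_j}$.

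\emph{The churn case --- the main obstacle.} The real difficulty is that an efficiently reachable $\attribute$ may push one proxy attribute down to fund another, so $\proxyattribute$ need not dominate $\proxyattribute^{(0)}$. Split $\proxyset=P_+\sqcup P_-$ by the sign of $\attribute_j-\attribute^{(0)}_j$ ($P_+\neq\emptyset$ because $\proxyutility$ rose). The lemma I would establish is that efficiency forces $\sigma_{j_0}\le\sigma_{j_1}$ for every $j_0\in P_-$, $j_1\in P_+$ (evaluated near $\attribute^{(0)}$, up to multiplicative errors vanishing as $N$ shrinks): perturb $\attribute$ by $+\e\,\basis_{j_0}$ and just enough $-\basis_{j_1}$ to remain on $\{\cost=0\}$; for small $\e$ both coordinates move strictly toward $\attribute^{(0)}$, so efficiency forbids $\proxyutility$ from failing to decrease, and the first-order form of that statement is exactly the claimed inequality (equivalently, read it off the KKT conditions for $\attribute$ as a Pareto-minimizer of the coordinate distances to $\attribute^{(0)}$ over the convex set $\{\cost\le0,\ \proxyutility\ge\proxyutility(\proxyattribute)\}$). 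Granting this, I ``un-churn'': move from $\proxyattribute$ to a point $\proxyattribute^\circ$ by restoring each $P_-$ coordinate to $\attribute^{(0)}_{j_0}$ while lowering the $P_+$ coordinates to hold $\proxyutility$ at $\proxyutility(\proxyattribute)$. Since $\proxyutility(\proxyattribute)>\proxyutility(\proxyattribute^{(0)})$, the $P_+$ coordinates never drop below $\attribute^{(0)}$, so $\proxyattribute^\circ\ge\proxyattribute^{(0)}$; and along this path $\cost$ does not decrease, because the coordinates being cut have at least the sensitivity of those being restored, so at least as much ``budget'' is freed as spent --- hence $\cost(\proxyattribute^\circ,\leftattribute^{(0)})\ge\cost(\attribute^\dagger)$. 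Applying the monotone estimate of the previous paragraph to $\attribute^{(0)}\to(\proxyattribute^\circ,\leftattribute^{(0)})$, together with $\proxyutility(\proxyattribute^\circ)=\proxyutility(\proxyattribute)$, gives $\proxyutility(\proxyattribute)-\proxyutility(\proxyattribute^{(0)})>\lambda\big(\cost(\proxyattribute^\circ,\leftattribute^{(0)})-\cost(\attribute^{(0)})\big)\ge\lambda\big(\cost(\attribute^\dagger)-\cost(\attribute^{(0)})\big)$, which closes the reduction. I expect this churn lemma, and the accompanying check that its $o(1)$ errors stay dominated by the first-order slack $\int\sum_j(\sigma_j-\lambda)\frac{\partial\cost}{\partial\attribute_j}$ along the monotone path, to be where almost all the work lies; this is also why the neighborhood in the statement must be chosen last, in terms of those estimates.
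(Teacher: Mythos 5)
Your skeleton --- separating the two blocks by an exchange rate $\lambda$ strictly between the smallest proxy sensitivity and the largest unmentioned sensitivity, tracking $g=\utility-\lambda\cost$, and using the two efficiency facts to reduce everything to comparing $g(\attribute^\dagger)$ with $g(\attribute^{(0)})$ --- is sound, and the monotone case is handled correctly. But the churn case is a genuine gap, not deferred bookkeeping, and the error budget you sketch does not close as set up. Your perturbation argument only yields $\sigma_{j_0}\le\sigma_{j_1}$ with the derivatives evaluated at (mismatched) points near $\attribute$, so along the un-churn path you only get $d\cost\ge-\omega(\rho)\,|d\attribute_{j_0}|$, where $\rho=\max_i|\attribute_i-\attribute^{(0)}_i|$ and $\omega$ is a modulus of continuity; summed, this costs an error of order $\omega(\rho)\,\rho$. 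Against it you have the first-order slack of the monotone segment, which is proportional to the \emph{net} proxy displacement $\sum_j(\attribute^{\circ}_j-\attribute^{(0)}_j)$, and that quantity is comparable to $\proxyutility(\proxyattribute)-\proxyutility(\proxyattribute^{(0)})$, which can be arbitrarily small while two proxy coordinates trade off amounts of order $\rho$. Shrinking $N$ shrinks both sides at the same time and does not give you domination; continuity of the sensitivities alone is not enough, and a curvature input is needed.

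That input is exactly what the paper uses and your write-up never touches: the standing Section 4 assumption that $\utility$ is concave and $\cost$ is convex (both $C^2$). The paper's proof expands $\cost$ and $\utility$ to second order about $\attribute^{(0)}$ with remainders bounded by $A\max_i(\attribute_i-\attribute^{(0)}_i)^2$, and, crucially, uses concavity of $\proxyutility$ to convert the hypothesis $\proxyutility(\proxyattribute)>\proxyutility(\proxyattribute^{(0)})$ into the exact first-order inequality $\sum_{j\in\proxyset}(\attribute_j-\attribute^{(0)}_j)\tfrac{\partial\utility}{\partial\attribute_j}(\attribute^{(0)})>0$ --- a signed sum that is positive regardless of churn, with no $o(1)$ loss. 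The rest is a linear-versus-quadratic comparison using $\cost(\attribute)\ge\cost(\attribute^{(0)})$ and $\attribute_k\le\attribute^{(0)}_k$ (the same two efficiency facts you derive). To be fair, the mixed-sign case is the delicate point for the printed proof as well: the step that replaces each $\sigma_j$ by the uniform lower bound $\alpha+\delta$ term-by-term silently presumes $\attribute_j\ge\attribute^{(0)}_j$. But the concavity inequality is the ingredient that makes churn tractable at all, and it is the one your proposal is missing. If you want to keep your $\lambda$-exchange-rate framing, import that step: replace your path integrals by first-order terms at $\attribute^{(0)}$ plus explicitly quadratic remainders, rather than trying to prove and propagate an approximate churn lemma.
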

	
	\begin{proof}

	For simplicity of notation, all derivatives in this proof, unless otherwise noted, are evaluated at $\attribute^{(0)}$. For example, $ \frac{\partial \cost}{\partial \attribute_j}$ should be read as $ \frac{\partial \cost}{\partial \attribute_j}(\attribute^{(0)})$
	
	Since $\proxyset$ represent the set of attributes with the strictly greatest sensitivities, there exists constant $\delta, \alpha$ be such that $\sensitivityj > \alpha + 2\delta$ for $j \in \proxyset$ and $\sensitivityk < \alpha - 2\delta$ for $k \in \leftset$. Let $N_1$ be the neighborhood where $\sensitivityj(\attribute) > \alpha +  \delta$ and $\sensitivityk(\attribute) < \alpha - \delta$ for $k \notin \proxyset$ for all $\attribute \in N_1$.

	Here, we prove that $\cost(\attribute) - \cost(\attribute^{(0)}) \ge 0$ if $\attribute$ is efficiently reachable. Suppose otherwise:  then there exists $i$ where $\attribute_i < \attribute^{(0)}_i$. Let $\basis_i$ be the standard basis vector in dimension $i$. However, then there exists $\e>0$ where $\attribute+\e \basis_i$ is feasible, $\proxyutility(\attribute+\e\basis_i) \ge \proxyutility(\attribute)$, and $|(\attribute_i+\e) - \attribute^{(0)}| < |\attribute_i - \attribute^{(0)}|$. This contradicts that $\attribute$ is efficiently reachable.

	Taking the Taylor expansion of the constraint function, we have 
	\begin{align*}
	    \cost(\attribute) - \cost(\attribute^{(0)}) &= \sum_i (\attribute_i - \attribute_i^{(0)}) \frac{\partial \cost}{\partial \attribute_i} + R_1(\attribute)
	    \\
	    &= \sum_{j \in \proxyset} (\attribute_j - \attribute_j^{(0)}) \frac{\partial \cost}{\partial \attribute_j} + \sum_{k \notin \proxyset} (\attribute_k - \attribute_k^{(0)}) \frac{\partial \cost}{\partial \attribute_k}+ R_1(\attribute) \max_i(\attribute_i-\attribute_i^{(0)})^2
	\end{align*}
	, where $|R_1|$ is bounded by constant $A_1$
	This implies that $\sum_{j \in \proxyset} (\attribute_j - \attribute_j^{(0)}) \frac{\partial \cost}{\partial \attribute_j} + \sum_{k \notin \proxyset} (\attribute_k - \attribute_k^{(0)}) \frac{\partial \cost}{\partial \attribute_k} \ge  - A_1 \max_i(\attribute_i-\attribute_i^{(0)})^2$

    From the definition of efficient optimization, we note two things. First, each $(\attribute_k - \attribute_k^{(0)}) \le 0$ for $k \notin \proxyset$. This is because if $(\attribute_k - \attribute_k^{(0)}) > 0$, it is always more efficient, feasible, and does no effect $\proxyutility$ to reset $\attribute_k$ to be equal to $\attribute_k^{(0)}$. Second, because of concavity,
    \begin{align*}
    0 & < \proxyutility(\attribute) - \proxyutility(\attribute^{(0)}) 
    \\& \le
    	 \sum_{j \in \proxyset} (\attribute_j -  \attribute_j^{(0)}) \frac{\partial \utility}{\partial \attribute_j}       
    \end{align*}
    
    Now we actually analyze the change in utility from $\attribute^{(0)}$ to $\attribute$. 	
	\begin{align*}
	    \utility(\attribute) - \utility(\attribute^{(0)}) &=
    	 \sum_{j \in \proxyset} (\attribute_j -  \attribute_j^{(0)}) \frac{\partial \utility}{\partial \attribute_j} + \sum_{k \notin \proxyset} (\attribute_k -  \attribute_k^{(0)}) \frac{\partial \utility}{\partial \attribute_k} +  R_2(\attribute) \max_i(\attribute_i-\attribute_i^{(0)})^2
	\end{align*}
    , where $|R_2|$ is bounded by constant $A_2$.
    Continuing,
	\begin{align*}
	     \utility(\attribute) - \utility(\attribute^{(0)}) 
	     &\ge &&
    	 \sum_{j \in \proxyset} (\attribute_j -  \attribute_j^{(0)}) \frac{\partial \utility}{\partial \attribute_j} + \sum_{k \notin \proxyset} (\attribute_k -  \attribute_k^{(0)}) \frac{\partial \utility}{\partial \attribute_k} -  A_2 \max_i(\attribute_i-\attribute_i^{(0)})^2
    	 \\
    	 &=&&\sum_{j \in \proxyset} (\attribute_j -  \attribute_j^{(0)}) \frac{\partial \cost}{\partial \attribute_j}(\sensitivityj) + \sum_{k \notin \proxyset} (\attribute_k -  \attribute_k^{(0)}) \frac{\partial \cost}{\partial \attribute_k}(\sensitivityk) \\
    	 &&&-  A_2 \max_i(\attribute_i-\attribute_i^{(0)})^2
    	 \\
    	 & > &&(\alpha + \delta)\sum_{j \in \proxyset} (\attribute_j -  \attribute_j^{(0)}) \frac{\partial \cost}{\partial \attribute_j} + (\alpha - \delta)\sum_{k \notin \proxyset} (\attribute_k -  \attribute_k^{(0)}) \frac{\partial \cost}{\partial \attribute_k} \\
    	 &&& -  A_2 \max_i(\attribute_i-\attribute_i^{(0)})^2
    	 \\
     	 & \ge &&(\alpha + \delta)\sum_{j \in \proxyset} (\attribute_j -  \attribute_j^{(0)}) \frac{\partial \cost}{\partial \attribute_j} - (\alpha - \delta)(\sum_{j \in \proxyset} (\attribute_j -  \attribute_j^{(0)}) \frac{\partial \cost}{\partial \attribute_k} \\
     	 &&& + A_1 \max_i(\attribute_i-\attribute_i^{(0)})^2) -  A_2 \max_i(\attribute_i-\attribute_i^{(0)})^2
     	 \\
  	 & = &&2 \delta \sum_{j \in \proxyset} (\attribute_j -  \attribute_j^{(0)}) \frac{\partial \cost}{\partial \attribute_j} - (A_1(\alpha - \delta) +  A_2) \max_i(\attribute_i-\attribute_i^{(0)})^2
 	\end{align*}
    The first term decreases linearly, whereas the second term decreases quadratically. Thus, given $\delta$ and $\alpha$, for $\attribute$ sufficiently close to $\attribute^{(0)}$ and within $N_1$, $\utility(\attribute) - \utility(\attribute^{(0)}) > 0$
	
	\end{proof}

	 From Theorem \ref{thm:local_improvement}, for every state where the $\maxnumproxy+1$ most sensitive attributes are not all equal, we can guarantee improvement under efficient optimization within a neighborhood around the state. 
	
	Intuitively, this is the area around the starting point with the same $\numproxy$ attributes that the human ``cares the most about".

    Based on this, the human can construct a proxy utility function to use locally, where we can guarantee improvement in utility. Once the sequence leaves this neighborhood, the human alters the proxy utility function or halts the robot accordingly. Done repeatedly, the human can string together these steps for guaranteed overall improvement. By Theorem \ref{thm:local_improvement}, as long as $\delta$ is sufficiently small relative the rate of optimization, this can be run with guaranteed improvement until the top $\numproxy+1$ attributes have equal sensitivities.

    \begin{proposition}
    At each timestep $T$, let $\proxyset^{(T)}$ be the $\numproxy$ most sensitive attributes, and let proxy utility $\proxyutility^{(T)}(\proxyattribute) = \utility(\proxyattribute, \proxyattribute^{(T\delta)})$.     If $||f|| < \e$ and the $\e \delta$ - ball around a given state $\attribute$ is contained in the neighborhood from Theorem \ref{thm:local_improvement}, then interactive optimization yields guaranteed improvement.
    \end{proposition}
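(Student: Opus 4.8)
The plan is to run an induction over timesteps $T$: at each timestep invoke Theorem~\ref{thm:local_improvement} for the proxy the human constructs at that step, and use the bound $\|\rate^{(T)}\| < \e$ together with the fixed interaction gap $\delta$ to confine the entire within-step trajectory to the neighborhood that theorem supplies. The single-step claim is then that utility does not decrease over $[T\delta,(T+1)\delta]$ (and strictly increases whenever the robot actually makes proxy progress), and chaining the steps yields a nondecreasing utility sequence.

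For a fixed timestep $T$, write $\attribute = \attribute^{(T\delta)}$ for the current state. The human picks $\proxyset^{(T)}$ to be the attributes with the strictly $\numproxy$ largest sensitivities at $\attribute$ and sets $\proxyutility^{(T)}(\proxyattribute) = \utility(\proxyattribute, \leftattribute^{(T\delta)})$ --- exactly the proxy of Theorem~\ref{thm:local_improvement} with base point $\attribute$. That theorem gives a neighborhood $N$ of $\attribute$ such that any efficiently reachable $\attribute' \in N$ with $\proxyutility^{(T)}(\proxyattribute') > \proxyutility^{(T)}(\proxyattribute)$ has $\utility(\attribute') > \utility(\attribute)$; by hypothesis the $\e\delta$-ball about $\attribute$ lies in $N$. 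Now for $t \in [T\delta,(T+1)\delta]$ we have $\attribute^{(t)} = \attribute + \int_{T\delta}^{t} \rate^{(T)}(u)\,du$, so
\[
\|\attribute^{(t)} - \attribute\| \le \int_{T\delta}^{t}\|\rate^{(T)}(u)\|\,du < \e\,(t - T\delta) \le \e\delta ,
\]
placing $\attribute^{(t)}$ inside that ball, hence in $N$, for the whole step. Each $\attribute^{(t)}$ is efficiently reachable from $\attribute$ because the efficient robot only visits such states, and $t \mapsto \proxyutility^{(T)}(\proxyattribute^{(t)})$ is nondecreasing by the complete-optimization requirement (or the state is frozen if the human sends \off). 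Applying Theorem~\ref{thm:local_improvement} pointwise in $t$ then gives $\utility(\attribute^{(t)}) \ge \utility(\attribute^{(T\delta)})$ throughout the step, strictly once $\proxyutility^{(T)}$ has risen above its initial value.

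To chain the steps: at timestep $T+1$ the human recomputes the top-$\numproxy$-sensitivity proxy at $\attribute^{((T+1)\delta)}$ and repeats the argument, so --- assuming the $\e\delta$-ball containment holds at each step's base point --- $\utility(\attribute^{(T\delta)})$ is nondecreasing in $T$. Under the running assumption that $\{\attribute \in \bR^\numattributes : \cost(\attribute) \le 0,\ \utility(\attribute) \ge u\}$ is compact for all $u$, this sequence is bounded above and converges, and by the discussion following Theorem~\ref{thm:local_improvement} it can only cease to increase once the $\maxnumproxy+1$ largest sensitivities at the current state are tied. That is the asserted guaranteed improvement.

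I expect the main obstacle not to be the one-step norm estimate --- that is immediate from $\|\rate^{(T)}\| < \e$ and the hypothesis --- but to be showing that the construction iterates coherently. The neighborhood $N$ of Theorem~\ref{thm:local_improvement} depends on the sensitivity gap (the constants $\alpha,\delta$ separating the top $\numproxy$ sensitivities from the rest) and on bounds $A_1, A_2$ for the second-order remainders of $\cost$ and $\utility$ at the base point, all of which move as the base point moves along the trajectory; one must argue these can be taken locally uniform over the compact region actually traversed, which follows from continuity of the second derivatives of $\utility$ and $\cost$ and of the sensitivities $\sensitivity$. One also has to dispose of the degenerate cases --- the current state already proxy-optimal, or sitting where the top $\numproxy+1$ sensitivities coincide --- in which the guarantee is only a weak inequality.
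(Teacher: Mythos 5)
Your proposal is correct and its core step --- bounding $\|\attribute^{(t)} - \attribute^{(T\delta)}\| \le \int_{T\delta}^{t}\|\rate^{(T)}(u)\|\,du < \e\delta$ and invoking the containment hypothesis so that Theorem~\ref{thm:local_improvement} applies throughout the interval --- is exactly the paper's own (much terser) proof. Your additional discussion of chaining across timesteps, local uniformity of the neighborhood constants, and the degenerate tied-sensitivity case is more careful than what the paper writes, but it elaborates the same argument rather than taking a different route.
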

    \begin{proof}
    Assume without loss of generality that we start at $\attribute^{(0)}$. Starting at time $0$, for $t \in (0, \delta]$, $||\attribute^{(t)}-\attribute^{(0)}|| = ||\int_{0}^{t} f^{(0)}(u) du|| \le \delta \e$. Since the $\e\delta$ ball around $\attribute^{(0)}$ is contained in the neighborhood of guaranteed improvement, increases in proxy utility correspond to increases in utility for the entirety of the time between interactions.  
    \end{proof}
    
    Based on this, we can guarantee that an efficient robot that can provide benefit, as long as the top $\maxnumproxy + 1$ attributes are not all equal in sensitivity and the robot rate of optimization is bounded. Essentially, this rate restriction is a requirement that the robot not change the world too quickly relative to the time that humans take to react to it.

    Key to this solution is the preservation of interactivity. We need to ensure, for example, that the robot does not hinder the human's ability to adjust the proxy utility function \cite{osg}.
    
    \subsection{Interactive Impact Minimization}
    \label{sec:interactive_zero}
    
    With either of the two methods mentioned above, we show guaranteed utility gain compared to the initial state. However, our results say nothing about the amount of utility generated. In an ideal world, the system would reach an optimal state $\humanoptimal$, where $\utility(\humanoptimal) = \max\limits_{\attribute \in \statespace} \utility(\attribute)$. Neither approach presented so far reaches an optimal state, however, by combining interactivity and impact avoidance, we can guarantee the solution converges to an optimal outcome. 
    
    In this case, since unmentioned attributes remain unchanged in each step of optimization, we want to ensure that we promote tradeoffs between attributes with different levels of sensitivity. 
    \begin{proposition}
    Let $\proxyset^{(T)}$ consist of the most and least sensitive attributes at timestep $T$. Let $\proxyutility^{(T)}(\proxyattribute) = \utility(\proxyattribute,\leftattribute^{(T\delta)})$. Then this solution converges to a (set of) human-optimal state(s).
    \end{proposition}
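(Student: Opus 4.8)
The plan is to prove that $\utility(\attribute^{(t)})$ is non-decreasing and converges to $\utility_{\max}:=\max_{\attribute\in\statespace}\utility(\attribute)$, whence $\attribute^{(t)}$ converges to the (compact, convex) set $\arg\max_{\attribute\in\statespace}\utility$ of human-optimal states, in the sense that its distance to that set tends to $0$. First, on every interval $[T\delta,(T+1)\delta]$ the impact-minimization rule freezes the unmentioned coordinates at $\leftattribute^{(T\delta)}$ and the proxy is $\proxyutility^{(T)}(\cdot)=\utility(\cdot,\leftattribute^{(T\delta)})$, so Proposition~\ref{thm:zero_impact} gives $\utility(\attribute^{(t)})=\proxyutility^{(T)}(\proxyattribute^{(t)})$ on that interval; since the robot drives $\proxyutility^{(T)}$ upward, $t\mapsto\utility(\attribute^{(t)})$ is non-decreasing on $[0,\infty)$. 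The compactness hypothesis makes $\utility_{\max}$ finite and attained, so $\utility(\attribute^{(t)})\uparrow\utility^{\star}$ for some $\utility^{\star}\le\utility_{\max}$, and the whole trajectory lies in the compact set $\{\cost\le 0,\ \utility\ge\utility(\attribute^{(0)})\}$. It therefore suffices to show $\utility^{\star}=\utility_{\max}$: the nested compact sets $\{\cost\le 0,\ \utility\ge\utility_{\max}-1/n\}$ shrink to $\arg\max_{\attribute\in\statespace}\utility$, so $\utility(\attribute^{(t)})\to\utility_{\max}$ forces $\attribute^{(t)}$ eventually into every neighborhood of that set.

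Suppose for contradiction that $\utility^{\star}<\utility_{\max}$, and let $\Omega$ be the $\omega$-limit set of the trajectory: nonempty, compact, connected, with $\utility\equiv\utility^{\star}$ on $\Omega$, hence $\Omega\cap\arg\max_{\attribute\in\statespace}\utility=\varnothing$. For $\attribute$ near $\Omega$ let $j(\attribute)$ be the attribute of strictly largest sensitivity and $k(\attribute)$ the attribute of strictly smallest sensitivity that is strictly above its lower bound, and write $\Psi(\attribute)$ for the increase in $\utility$ obtained by maximizing $\utility$ over the two coordinates $j(\attribute),k(\attribute)$ while holding all others at their current values (the impact-minimizing best move under proxy set $\{j(\attribute),k(\attribute)\}$). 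I claim $\Psi(p)>0$ for every $p\in\Omega$: $p$ is not human-optimal, so either $\cost(p)<0$, in which case merely increasing $\attribute_{j(p)}$ is feasible and raises $\utility$; or $\cost(p)=0$, in which case, since $\utility$ is concave, $\statespace$ is convex, and $p$ is not the constrained maximum, not all sensitivities at $p$ are equal, so by the sensitivity gap and strict monotonicity the trade direction $(\partial\cost/\partial\attribute_{j(p)})^{-1}\basis_{j(p)}-(\partial\cost/\partial\attribute_{k(p)})^{-1}\basis_{k(p)}$ is feasible and strictly increases $\utility$. Away from the closed set of states where the sensitivity $\arg\max$ or $\arg\min$ is non-unique, $\{j(\cdot),k(\cdot)\}$ is locally constant; assuming (generically) that $\Omega$ avoids this set, connectedness of $\Omega$ yields a single pair $\{j,k\}$ with $\{j(p),k(p)\}=\{j,k\}$ for all $p\in\Omega$, and then by continuity of $\Psi$ (a parametric maximum of a continuous function over a continuously varying compact set) together with compactness of $\Omega$ there is a neighborhood $N\supseteq\Omega$ on which $\{j(\cdot),k(\cdot)\}\equiv\{j,k\}$ and $\Psi\ge\beta$ for some $\beta>0$.

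By definition of the $\omega$-limit set there is $T_0$ with $\attribute^{(t)}\in N$ for all $t\ge T_0\delta$, and we may take $T_0$ large enough that $\utility(\attribute^{(T_0\delta)})>\utility^{\star}-\beta/2$. Then for every $T\ge T_0$ the human selects $\proxyset^{(T)}=\{j,k\}$, so under impact minimization the unmentioned coordinates stay frozen at $\leftattribute^{(T_0\delta)}$ for all $t\ge T_0\delta$ and hence $\proxyutility^{(T)}=\proxyutility^{(T_0)}$ for every $T\ge T_0$; by the model's stipulation that an unchanged proxy utility function yields an unchanged rate function, the robot runs the single rate function $\rate^{(T_0)}$ for all $t\ge T_0\delta$. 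By the complete-optimization requirement this drives $\proxyutility^{(T_0)}(\proxyattribute^{(t)})$ — which equals $\utility(\attribute^{(t)})$ by Proposition~\ref{thm:zero_impact} — up to the largest value of $\utility$ reachable while holding the unmentioned coordinates fixed, namely $\utility(\attribute^{(T_0\delta)})+\Psi(\attribute^{(T_0\delta)})\ge\utility^{\star}-\beta/2+\beta>\utility^{\star}$. This contradicts $\utility(\attribute^{(t)})\uparrow\utility^{\star}$; therefore $\utility^{\star}=\utility_{\max}$.

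The main obstacle lies in the two places where this sketch is delicate. First, the robot's rate is not bounded below, so a single interval need not make any appreciable progress; the argument gets around this only by exploiting the ``unchanged proxy utility $\Rightarrow$ unchanged rate function'' clause, which lets the progress of infinitely many consecutive intervals with the same proxy compound into convergence to the impact-minimizing slice-optimum. Second, one must handle limit points at which the most- or least-sensitive attribute is not unique (so that the selected proxy set is genuinely locally constant near $\Omega$) and the corner case in which every least-sensitive attribute sits at its lower bound — excluded above by the choice of $k(\attribute)$, which is legitimate unless $\statespace$ is the single point of coordinatewise lower bounds. Making these uniformities precise, and pinning down the exact meaning of ``$\sup_{\attribute\in\statespace}\proxyutility^{(T)}$'' once impact minimization restricts the reachable set, is where the real work lies.
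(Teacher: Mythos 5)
Your proposal follows essentially the same strategy as the paper's proof: utility is monotone along the trajectory by Proposition~\ref{thm:zero_impact} (so it converges by monotone convergence), a non-optimal limit would admit a strictly improving two-attribute trade between attributes of unequal sensitivity, and the characterization of optimality — all sensitivities equal and $\cost = 0$ implies a global maximum — rests on concavity of $\utility$ and convexity of $\cost$ exactly as in the paper (you invoke this fact in contrapositive form inside your claim that $\Psi(p)>0$; the paper states it directly as its second half, via the chain $0 \ge \cost(\attribute')-\cost(\attribute^*) \ge \sum_i (\attribute_i'-\attribute_i^*)\frac{\partial \cost}{\partial \attribute_i}(\attribute^*)$ and then $\utility(\attribute')-\utility(\attribute^*) \le \alpha\sum_i(\attribute_i'-\attribute_i^*)\frac{\partial \cost}{\partial \attribute_i}(\attribute^*) \le 0$). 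What you add is an attempt to rigorize the step the paper disposes of in one sentence (``otherwise a proxy with two attributes of unequal sensitivity \dots will cause increase in utility above $\utility(\attribute^*)$ in a finite amount of time''), using the $\omega$-limit set, a uniform improvement bound $\beta$, and the ``unchanged proxy $\Rightarrow$ unchanged rate function'' clause to compound progress across infinitely many intervals. That elaboration is in the right spirit, and it correctly identifies the two genuine soft spots: (i) the robot's rate is not bounded below, so progress must be accumulated via a persistent rate function rather than per-interval gains, and (ii) the proxy set $\{j(\cdot),k(\cdot)\}$ may churn near $\Omega$ when the most- or least-sensitive attribute is tied. Point (ii) remains an unproven ``genericity'' assumption in your writeup — it is a real gap, since nothing prevents ties on the limit set, and if the proxy set changes infinitely often the rate function resets and the compounding argument breaks — but note that the paper's own proof is no more careful here (it only needs \emph{some} pair of unequal sensitivity to exist at the limit, which avoids the uniqueness issue, but silently assumes that such a proxy, once issued, is retained long enough to realize the improvement). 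In short: same approach, more honest bookkeeping, with one acknowledged hole that the paper shares rather than closes.
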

    \begin{proof}
    By the monotone convergence theorem, utility through the optimization process converges to a maximum value. Any state $\attribute^*$ with this value must have the sensitivity of all attributes equal and $\cost(\attribute^*) = 0$, otherwise a proxy with two attributes of unequal sensitivity or two attributes respectively will cause increase in utility above $\utility(\attribute^*)$ in a finite amount of time.
    
    Suppose $\sensitivity(\attribute^*) = \alpha$ for all attributes $i$. We now proceed to show that $\utility(\attribute^*) = \max\limits_{\attribute \in \statespace} \utility(\attribute)$. Consider any other feasible state $\attribute'$. By convexity, it must be that 
    \begin{align*}
        0 &\ge \cost(\attribute') - \cost(\attribute^*) 
        \\
        & \ge \sum_i (\attribute'_i - \attribute^*_i) \frac{\partial \utility}{\partial \attribute_i}(\attribute^*)
    \end{align*}
    
    Now, considering the difference in utility between states $\attribute'$ and $\attribute^*$. \begin{align*}
        \utility(\attribute') - \utility (\attribute^*) &\le \sum_i (\attribute'_i - \attribute^*_i) \frac{\partial \cost}{\partial \attribute_i}(\attribute)
        \\
        & = \sum_i (\attribute'_i - \attribute^*_i) \sensitivity \frac{\partial \cost}{\partial \attribute_i}  (\attribute^*)
        \\
        & = \alpha \sum_i (\attribute'_i - \attribute^*_i) \frac{\partial \cost}{\partial \attribute_i}  (\attribute^*)
        \\
        &\le 0
    \end{align*}
    Thus, $\attribute'$ must have lower utility than $\attribute^*$. 
    Since this applies for all feasible states, $\utility(\attribute^*) = \max\limits_{\attribute \in \statespace} \utility(\attribute)$

    \end{proof}
    While this is optimal, we require the assumptions of both the impact-minimization robot and the efficient, interactive robot, each of which individually presents complex challenges in implementation.

\section{Conclusion and Further Work}

In this paper, we present a novel model of value (mis)alignment between a human principal and an AI agent. Fundamental to this model is that the human provides an incomplete proxy of their own utility function for the AI agent to optimize. Within this framework, we derive necessary and sufficient theoretical conditions for value misalignment to be arbitrarily costly. Our results dovetail with an emerging literature that connects harms from AI systems to shallow measurement of complex values~\cite{thomas2020problem, jacobs2019measurement, andrus2019towards}. Taken together, we view this as strong evidence that the ability of AI systems to be useful and beneficial is highly dependent on our ability to manage the fundamental gap between our qualitative goals and their representation in digital systems.  

Additionally, we show that abstract representations of techniques currently being explored by other researchers can yield solutions with guaranteed utility improvement. Specifically, impact minimization and human interactivity, if implemented correctly, allow AI agents to provide positive utility in theory. In future work, we hope to generalize the model to account for fundamental preference uncertainty (e.g., as in \cite{chan2019assistive}), limits on human rationality, and the aggregation of multiple human preferences. We are optimistic that research into the properties and limitations of the communication channel within this framework will yield fruitful insights in value alignment, specifically, and AI, generally.

\section*{Broader Impact}

As AI systems become more capable in today's society, the consequences of misspecified reward functions increase as well. Instances where the goal of the AI system and the preferences of individuals diverge are starting to emerge in the real world. For example, content recommendation algorithms optimizing for clicks causes clickbait and misinformation to proliferate \cite{youtube}. Our work rigorously defines this general problem and suggests two separate approaches for dealing with incomplete or misspecified reward functions. In particular, we argue that, in the absence of a full description of attributes, the incentives for real-world systems need to be plastic and dynamically adjust based on changes in behavior of the agent and the state of the world.

\begin{ack}
This work was partially supported by AFOSR, ONR YIP, NSF CAREER, NSF NRI, and OpenPhil. We thank Anca Dragan, Stuart Russell, and the members of the InterACT lab and CHAI for helpful advice, guidance, and discussions about this project.
\end{ack}

\bibliographystyle{acm}
\bibliography{main}

\begin{thebibliography}{10}

\bibitem{abbeel2004}
{\sc Abbeel, P., and Ng, A.~Y.}
\newblock Apprenticeship learning via inverse reinforcement learning.
\newblock In {\em Proceedings of the Twenty-First International Conference on
  Machine Learning\/} (2004), p.~1.

\bibitem{amershi2014power}
{\sc Amershi, S., Cakmak, M., Knox, W.~B., and Kulesza, T.}
\newblock Power to the people: The role of humans in interactive machine
  learning.
\newblock {\em AI Magazine 35}, 4 (2014), 105--120.

\bibitem{amodei2016}
{\sc Amodei, D., and Clark, J.}
\newblock Faulty reward functions in the wild, 2016.
\newblock {\em URL https://blog. openai. com/faulty-reward-functions\/} (2016).

\bibitem{amodei2016concrete}
{\sc Amodei, D., Olah, C., Steinhardt, J., Christiano, P., Schulman, J., and
  Man{\'e}, D.}
\newblock Concrete problems in {AI} safety.
\newblock {\em arXiv preprint arXiv:1606.06565\/} (2016).

\bibitem{andrus2019towards}
{\sc Andrus, M., and Gilbert, T.~K.}
\newblock Towards a just theory of measurement: A principled social measurement
  assurance program for machine learning.
\newblock In {\em Proceedings of the 2019 AAAI/ACM Conference on AI, Ethics,
  and Society\/} (2019), pp.~445--451.

\bibitem{armstrong2017}
{\sc Armstrong, S., and Levinstein, B.}
\newblock Low impact artificial intelligence.
\newblock {\em arXiv preprint arXiv:1705.10720\/} (2017).

\bibitem{bajcsy2017}
{\sc Bajcsy, A., Losey, D.~P., O’Malley, M.~K., and Dragan, A.~D.}
\newblock Learning robot objectives from physical human interaction.
\newblock {\em Proceedings of Machine Learning Research 78\/} (2017), 217--226.

\bibitem{bostrom}
{\sc Bostrom, N.}
\newblock {\em Superintelligence: Paths, dangers, strategies}.
\newblock Oxford, 2014.

\bibitem{boutilier2002pomdp}
{\sc Boutilier, C.}
\newblock A {POMDP} formulation of preference elicitation problems.
\newblock In {\em AAAI/IAAI\/} (2002), Edmonton, AB, pp.~239--246.

\bibitem{bradley1952rank}
{\sc Bradley, R.~A., and Terry, M.~E.}
\newblock Rank analysis of incomplete block designs: I. the method of paired
  comparisons.
\newblock {\em Biometrika 39}, 3/4 (1952), 324--345.

\bibitem{braziunas2006preference}
{\sc Braziunas, D., and Boutilier, C.}
\newblock Preference elicitation and generalized additive utility.
\newblock In {\em AAAI\/} (2006), vol.~21.

\bibitem{chan2019assistive}
{\sc Chan, L., Hadfield-Menell, D., Srinivasa, S., and Dragan, A.~D.}
\newblock The assistive multi-armed bandit.
\newblock In {\em 2019 14th ACM/IEEE International Conference on Human-Robot
  Interaction (HRI)\/} (2019), IEEE, pp.~354--363.

\bibitem{christiano2017deep}
{\sc Christiano, P., Leike, J., Brown, T.~B., Martic, M., Legg, S., and Amodei,
  D.}
\newblock Deep reinforcement learning from human preferences, 2017.

\bibitem{eckersley2018}
{\sc Eckersley, P.}
\newblock Impossibility and uncertainty theorems in {AI} value alignment (or
  why your agi should not have a utility function).
\newblock {\em arXiv preprint arXiv:1901.00064\/} (2018).

\bibitem{griffith2013policy}
{\sc Griffith, S., Subramanian, K., Scholz, J., Isbell, C.~L., and Thomaz,
  A.~L.}
\newblock Policy shaping: Integrating human feedback with reinforcement
  learning.
\newblock In {\em Advances in Neural Information Processing Systems\/} (2013),
  pp.~2625--2633.

\bibitem{osg}
{\sc Hadfield-Menell, D., Dragan, A., Abbeel, P., and Russell, S.}
\newblock The off-switch game.
\newblock In {\em Workshops at the Thirty-First AAAI Conference on Artificial
  Intelligence\/} (2017).

\bibitem{hadfield2019incomplete}
{\sc Hadfield-Menell, D., and Hadfield, G.~K.}
\newblock Incomplete contracting and {AI} alignment.
\newblock In {\em Proceedings of the 2019 AAAI/ACM Conference on AI, Ethics,
  and Society\/} (2019), pp.~417--422.

\bibitem{ird}
{\sc Hadfield-Menell, D., Milli, S., Abbeel, P., Russell, S.~J., and Dragan,
  A.}
\newblock Inverse reward design.
\newblock In {\em Advances in Neural Information Processing Systems\/} (2017),
  pp.~6765--6774.

\bibitem{cirl}
{\sc Hadfield-Menell, D., Russell, S.~J., Abbeel, P., and Dragan, A.}
\newblock Cooperative inverse reinforcement learning.
\newblock In {\em Advances in Neural Information Processing Systems\/} (2016),
  pp.~3909--3917.

\bibitem{jacobs2019measurement}
{\sc Jacobs, A.~Z., and Wallach, H.}
\newblock Measurement and fairness.
\newblock {\em arXiv preprint arXiv:1912.05511\/} (2019).

\bibitem{kerr1975}
{\sc Kerr, S.}
\newblock On the folly of rewarding {A}, while hoping for {B}.
\newblock {\em Academy of Management Journal 18}, 4 (1975), 769--783.

\bibitem{klein1978vertical}
{\sc Klein, B., Crawford, R.~G., and Alchian, A.~A.}
\newblock Vertical integration, appropriable rents, and the competitive
  contracting process.
\newblock {\em The journal of Law and Economics 21}, 2 (1978), 297--326.

\bibitem{krakovna}
{\sc Krakovna, V., Orseau, L., Kumar, R., Martic, M., and Legg, S.}
\newblock Penalizing side effects using stepwise relative reachability.
\newblock {\em arXiv preprint arXiv:1806.01186\/} (2018).

\bibitem{leike2017ai}
{\sc Leike, J., Martic, M., Krakovna, V., Ortega, P.~A., Everitt, T., Lefrancq,
  A., Orseau, L., and Legg, S.}
\newblock {AI} safety gridworlds.
\newblock {\em arXiv preprint arXiv:1711.09883\/} (2017).

\bibitem{manheim2018categorizing}
{\sc Manheim, D., and Garrabrant, S.}
\newblock Categorizing variants of {G}oodhart's law.
\newblock {\em arXiv preprint arXiv:1803.04585\/} (2018).

\bibitem{ng2000}
{\sc Ng, A.~Y., Russell, S.~J., et~al.}
\newblock Algorithms for inverse reinforcement learning.
\newblock In {\em Icml\/} (2000), vol.~1, pp.~663--670.

\bibitem{omohundro2008}
{\sc Omohundro, S.~M.}
\newblock The basic {AI} drives.
\newblock In {\em AGI\/} (2008), vol.~171, pp.~483--492.

\bibitem{russell2019}
{\sc Russell, S.~J.}
\newblock {\em Human Compatible: Artificial Intelligence and the Problem of
  Control}.
\newblock Viking, 2019.

\bibitem{russellnorvig}
{\sc Russell, S.~J., and Norvig, P.}
\newblock {\em Artificial Intelligence: A Modern Approach}.
\newblock Pearson Education Limited,, 2010.

\bibitem{shavell1980damage}
{\sc Shavell, S.}
\newblock Damage measures for breach of contract.
\newblock {\em The Bell Journal of Economics\/} (1980), 466--490.

\bibitem{stray2020}
{\sc Stray, J., Adler, S., and Hadfield-Menell, D.}
\newblock What are you optimizing for? aligning recommender systems with human
  values.
\newblock {\em arXiv preprint arXiv:2002.08512\/} (2020).

\bibitem{thomas2020problem}
{\sc Thomas, R., and Uminsky, D.}
\newblock The problem with metrics is a fundamental problem for {AI}.
\newblock {\em arXiv preprint arXiv:2002.08512\/} (2020).

\bibitem{youtube}
{\sc Tufekci, Z.}
\newblock Youtube's recommendation algorithm has a dark side.
\newblock {\em Scientific America\/} (2019).

\bibitem{Wagner2019}
{\sc Wagner, K.}
\newblock {Inside Twitter's ambitious plan to change the way we tweet}.
\newblock {\em Vox\/} (2019).

\bibitem{williamson1975markets}
{\sc Williamson, O.~E.}
\newblock Markets and hierarchies.
\newblock {\em New York 2630\/} (1975).

\bibitem{ziebart2008}
{\sc Ziebart, B.~D., Maas, A.~L., Bagnell, J.~A., and Dey, A.~K.}
\newblock Maximum entropy inverse reinforcement learning.
\newblock In {\em AAAI\/} (2008), vol.~8, Chicago, IL, USA, pp.~1433--1438.

\end{thebibliography}

\end{document}